\newcommand{\theHalgorithm}{\arabic{algorithm}}
\newcommand{\cmark}{\ding{51}}%
\newcommand{\xmark}{\ding{55}}%
\theoremstyle{plain}
\newtheorem{theorem}{Theorem}[section]
\newtheorem{proposition}[theorem]{Proposition}
\theoremstyle{definition}
\newtheorem{assumption}[theorem]{Assumption}
\theoremstyle{remark}
\newtheorem{remark}[theorem]{Remark}
\def\wrt{w.r.t.\@\xspace}
\newcommand{\weburl}{\url{brbiclab.epfl.ch/projects/turtle}\xspace}
\begin{document}

\newcommand{\methodold}{{HUME}\xspace}
\newcommand{\methodnew}{{TURTLE}\xspace}
\newcommand{\xhdr}[1]{\noindent{{\bf #1.}}}

\twocolumn[
\icmltitle{Let Go of Your Labels with Unsupervised Transfer}


\icmlsetsymbol{equal}{*}

\begin{icmlauthorlist}
\icmlauthor{Artyom Gadetsky}{equal,epfl}
\icmlauthor{Yulun Jiang}{equal,epfl}
\icmlauthor{Maria Brbić}{epfl}\\
\icmlauthor{}{}\\
\weburl
\end{icmlauthorlist}

\icmlaffiliation{epfl}{EPFL, Lausanne, Switzerland}

\icmlcorrespondingauthor{Maria Brbić}{mbrbic@epfl.ch}

\icmlkeywords{Machine Learning, ICML}

\vskip 0.3in
]



\printAffiliationsAndNotice{\icmlEqualContribution}  

\begin{abstract}
Foundation vision-language models have enabled remarkable zero-shot transferability of the pre-trained representations to a wide range of downstream tasks. However, to solve a new task, zero-shot transfer still necessitates human guidance to define visual categories that appear in the data. Here, we show that \textit{fully unsupervised transfer} emerges when searching for the labeling of a dataset that induces maximal margin classifiers in representation spaces of different foundation models. We present \methodnew, a fully unsupervised method that effectively employs this guiding principle to uncover the underlying labeling of a downstream dataset without any supervision and task-specific representation learning. We evaluate \methodnew on a diverse benchmark suite of 26 datasets and show that it achieves new state-of-the-art unsupervised performance. Furthermore, \methodnew, although being fully unsupervised, outperforms zero-shot transfer baselines on a wide range of datasets. In particular, \methodnew matches the average performance of CLIP zero-shot on 26 datasets by employing the same representation space, spanning a wide range of architectures and model sizes. By guiding the search for the underlying labeling using the representation spaces of two foundation models, \methodnew surpasses zero-shot transfer and unsupervised prompt tuning baselines, demonstrating the surprising power and effectiveness of unsupervised transfer.
\end{abstract}
\section{Introduction}
\label{introduction}
Transfer learning is a fundamental machine learning paradigm that leverages large-scale pre-training of deep neural networks to improve model performance on downstream tasks with limited resources \cite{pan2009survey}. Early transfer learning approaches relied on supervised fine-tuning of the entire model to solve a downstream task of interest \cite{kolesnikov2020big}. Recent works \cite{he2022masked, li2022efficient, zhou2022ibot, oquab2023dinov2, darcet2024vision} have shown that fine-tuning an entire model during transfer brings only marginal gains compared to training a linear classifier on top of the frozen pre-trained backbone (\textit{i.e.}, linear probe). Although these approaches eliminated the need for task-specific fine-tuning of representations, they still require at least a few labeled examples per class to achieve human-level performance on downstream tasks.

\begin{figure*}[t!]
    \centering
  \includegraphics[width=0.85\textwidth]{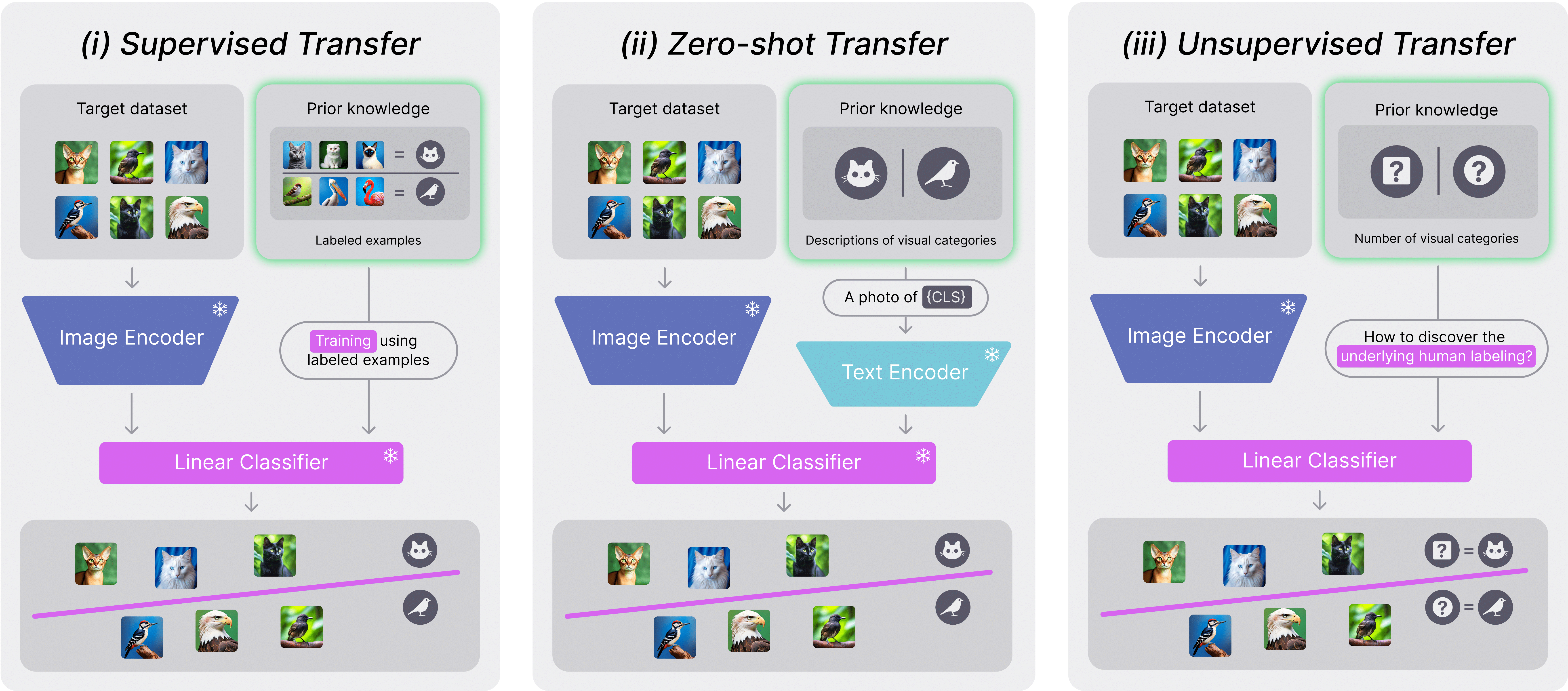}
  \caption{\xhdr{Types of downstream transfer differ in the amount of available supervision} Given representation spaces of foundation models, \textit{(i) supervised transfer}, represented as a linear probe, trains a linear classifier given labeled examples of a downstream dataset; \textit{(ii) zero-shot transfer} assumes descriptions of the visual categories that appear in a downstream dataset are given, and employs them via text encoder to solve the task; and \textit{(iii) unsupervised transfer} assumes the least amount of available supervision, \textit{i.e.}, only the number of categories is given, and aims to uncover the underlying human labeling of a dataset.}
  \label{fig:setting_description}
\end{figure*}

Recently, foundation models \cite{bommasani2022opportunities} have emerged, approaching human-level intelligence on a variety of tasks in the zero-shot setting. In particular, \citet{radford2021learning} proposed CLIP, which trains representations by aligning images and their corresponding captions in the joint embedding space. After pre-training, a zero-shot classifier is constructed by embedding the descriptions of visual categories that appear in the data. Subsequent works have successfully adopted this representation learning principle to enable zero-shot transfer in other domains, such as audio signal processing \cite{elizade2022clap, elizade2023natural}, biomedicine \cite{lin2023pmc, robinson2023contrasting} and symbolic regression \cite{meidani2024snip}. Despite the remarkable success of foundation models, zero-shot transfer still requires human instructions to solve a new task. But, can the representations of foundation models be utilized to solve a new task in a \textit{fully unsupervised manner}?

The simplest approach for unsupervised transfer would be to apply off-the-shelf clustering methods \cite{macqueen1967some} on top of the pre-trained representations. However, this strategy inevitably leads to a drastic decrease in performance compared to (weakly) supervised and zero-shot transfer \cite{zhou2022ibot, oquab2023dinov2}. Recently, \citet{gadetsky2023pursuit} introduced \methodold, an unsupervised learning framework for inferring the underlying human labeling of a given dataset from pre-trained representations. 
While \methodold has achieved superior performance compared to unsupervised baselines, it still requires task-specific representation learning and does not close the gap between unsupervised and zero-shot transfer.

Here, we present \methodnew, a method that enables unsupervised transfer from foundation models. The key idea behind our approach is to search for the labeling of a downstream dataset that maximizes the margins of linear classifiers in the space of single or multiple foundation models to uncover the underlying human labeling. Compared to zero-shot and supervised transfer, unsupervised transfer with \methodnew does not need the supervision in any form (Figure \ref{fig:setting_description}).  Compared to deep clustering methods \cite{xie2016unsupervised,chang2017deep,caron2019deep,van2020scan,niu2022spice}, \methodnew does not require task-specific representation learning that is expensive for modern foundation models.

We study the performance of \methodnew on the extensive evaluation suite spanning $26$ datasets and $7$ different foundation models. We compare \methodnew to various baselines that differ in the amount of available supervision for the downstream transfer. First, when compared to the recent state-of-the-art unsupervised baselines, \methodnew outperforms these baselines on all the considered datasets, setting the new state-of-the-art unsupervised performance. Compared to zero-shot transfer, \methodnew instantiated with two foundation models surpasses CLIP zero-shot transfer across all studied model sizes, achieving exceptional absolute improvements up to $35\%$ on the studied datasets. Given the same single representation space, \methodnew closely matches the performance of the CLIP zero-shot transfer on $7$ out of $8$ studied model architectures. In particular, the best \methodnew model, which utilizes the same model size and representation space, outperforms CLIP zero-shot on $13$ out of $26$ datasets. Finally, when compared to supervised transfer represented by linear probe, \methodnew approaches its performance on 5 out of 26 studied datasets, suggesting that labels may not be needed to infer the underlying human labeling when given sufficiently high-quality representations.
\section{Background}\label{background} 
In this section, we introduce the problem setting of unsupervised transfer and provide an overview of key concepts that we build upon.

\xhdr{Unsupervised transfer} Let $\mathcal{X} \subseteq \mathbb{R}^{d}$ be an input space and $\mathcal{D}=\{x_n\}_{n=1}^{N},\ x_n \in \mathcal{X} $ be a dataset consisting of $N$ samples and $C$ classes, where $C$ is known a priori. Let  $\phi(x):\mathcal{X} \xrightarrow{} \mathbb{R}^{q}$ denotes a mapping from an input space $\mathcal{X}$ to a $q$-dimensional representation space of a pre-trained foundation model. The question we aim to answer is how to utilize representations from foundation models to solve a new task in a \textit{fully unsupervised manner}. Thus, by unsupervised transfer we consider the task of inferring the underlying human labeling\footnote{We interchangeably use terms ``task'' and ``labeling'' in the context of this paper, since any labeling defines a task. Consequently, we refer to a task as human labeled if it corresponds to the underlying human labeling of a given dataset $\mathcal{D}$.} of a dataset $\mathcal{D}$ without any supervision given representations of foundation models.

\xhdr{Generalization-based learning of human labelings} \citet{gadetsky2023pursuit} recently introduced a generalization-based objective that evaluates the generalization ability of linear models on top of representations obtained from pre-trained models. The objective is motivated by a strong generalization ability of linear models in representation spaces of foundation models on many human labeled tasks. Equipped with this insight, the goal is to find such labeling that optimizes generalization ability of a linear model over all possible labelings of a given dataset. The quality of a labeling is measured by the ability of a linear model to generalize on a task defined by the given labeling. 

In particular, let $\tau : \mathcal{X} \xrightarrow{} \{1, \dots, C\}$ denote a labeling function of a dataset. Let $f(x) = w^T \phi(x)$ denote a linear model in the representation space $\phi(x)$ of a foundation model. Given a train-test split $(\mathcal{D}_{tr}, \mathcal{D}_{te})$, one can train the model on a training split $\mathcal{D}_{tr}$ with labeling $\tau(\mathcal{D}_{tr})$ and classification loss function $\mathcal{L}$ to obtain $\hat{f}$. After training, the generalization ability of the model can be assessed by computing the error of $\hat{f}$ on $\mathcal{D}_{te}$. Consequently, the generalization-based objective is defined as follows:
\begin{align}\label{hume_general}
\begin{split}
    \min\limits_{\tau} \sum_{x \in \mathcal{D}_{te}} \mathcal{L}(\hat{f}(x), \tau(x)) & \\  
 \textrm{s.t.}  \  \hat{f} = \arg \min_{f} \sum_{x\in \mathcal{D}_{tr}} & \mathcal{L}(f(x), \tau(x)),
\end{split}
\end{align}
where minimization is performed over the set of all possible labelings of a dataset $\mathcal{D}$. This leads to a difficult discrete optimization problem. To overcome this limitation, \citet{gadetsky2023pursuit} replace minimization \wrt a discrete labeling $\tau$ with minimization \wrt continuous parameters $\theta$ of a task encoder $\tau_{\theta}(x):\mathcal{X} \xrightarrow{} \Delta^{C-1}$, where $\Delta^{C-1}$ denotes $(C-1)$-dimensional probability simplex. As a result, careful design of $\tau_\theta$ becomes crucial since it defines the search space explored by the generalization-based objective (\ref{hume_general}).

\xhdr{\methodold framework} The instantiation of this framework, proposed in HUME \cite{gadetsky2023pursuit}, models $\tau_\theta$ using a linear model in the representation space $\psi(x)$ obtained via self-supervised pre-training on the target dataset $\mathcal{D}$:
\begin{align}\label{hume_task_parametrization}
    \tau_{\theta}^{\text{\methodold}}(x) = \sigma(\theta^T \psi(x)),
\end{align}
where $\sigma:\mathbb{R}\xrightarrow{} \Delta^{C-1}$ denotes an activation function. This modeling choice corresponds to \textit{restricting the search space} in (\ref{hume_general}) to a set of labelings which are linearly separable in the representation space $\psi(x)$. In addition, obtaining $\psi(x)$ \textit{requires task-specific representation learning}, \textit{i.e.}, running self-supervised learning on the target dataset $\mathcal{D}$. Since reliable self-supervised pre-training necessitates a large amount of data \cite{wang2020understanding}, this prevents successful unsupervised transfer on downstream tasks with limited resources. 

Given the task encoder parametrization $\tau_\theta^{\text{\methodold}}$, HUME optimizes the following objective to search for the underlying human labeling:
\begin{align}\label{hume_instantiation}
\begin{split}
        \mathcal{L}^{\text{\methodold}}(\theta) =  & \sum_{x \in \mathcal{D}_{te}} \mathcal{L}_{\text{ce}} (f_{\text{approx}}(x), \tau_\theta^{\text{\methodold}}(x)), \\
\end{split}
\end{align}
where $\mathcal{L}_{\text{ce}}$ is the cross-entropy loss function and $f_{\text{approx}}$ is an approximate solution to $\hat{f}$ obtained using iterative optimization algorithms. HUME resorts to iterative differentiation \cite{domke2012generic, shaban2019truncated} to solve the resulting bilevel optimization problem, \textit{leading to an expensive overall training procedure}.

\section{Analysis of Generalization-Based Objective}\label{analysis}
To understand inductive biases of the generalization-based objective proposed in (\ref{hume_general}), we consider this objective in case of binary labelings $\tau(x) : \mathcal{X} \rightarrow \{-1, +1\}$ with exponential loss function $\mathcal{L}_{\text{exp}}(f(x), \tau(x)) = \exp(-\tau(x) f(x))$. To simplify the analysis, we assume that the task encoder $\tau_{\theta}$ is a linear model in the same representation space $\phi(x)$, \textit{i.e.}, $\tau_{\theta}(x) = \sigma(\theta^T \phi(x))$, where $\sigma: \mathbb{R} \xrightarrow{} [-1; 1]$ is an odd activation function such as $\tanh$. This corresponds to \textit{restricting the search space} in (\ref{hume_general}) to a set of labelings which are linearly separable in the representation space $\phi(x)$. Additionally, we do not distinguish between train and test splits, \textit{i.e.}, $\mathcal{D}_{tr}=\mathcal{D}_{te}=\mathcal{D}$. We provide a detailed discussion of the aforementioned assumptions in the remarks at the end of this section.

To obtain an approximate solution to $\hat{f}$, we use iterative optimization algorithms. Specifically, let $w_{m+1} = \Xi(w_{m}, \mathcal{D})$ denote a one step of an optimization algorithm, \textit{i.e.}, $\Xi(w_{m}, \mathcal{D}) = w_{m} - \eta \nabla_{w} \sum_{x \in \mathcal{D}} \mathcal{L}(w_{m}^T \phi(x), \tau_{\theta}(x))$ for the gradient descent with a step size $\eta$. Similarly, let $w_{M} = \Xi^{(M)}(w_0, \mathcal{D})$ denote $M$ steps of an optimization algorithm starting from $w_{0}$. Eventually, the above specifications result in the following bilevel optimization problem:
\begin{align}
        \mathcal{L}^{\text{binary}}_{M}(\theta) =  & \sum_{x \in \mathcal{D}} \exp(-\tau_\theta(x) w_{M}^T \phi(x)) \label{bin_general_outer} \\
        & \textrm{s.t. } w_M = \Xi^{(M)}(w_0, \mathcal{D}) \label{bin_general_inner},
\end{align}
where we refer to (\ref{bin_general_outer}) and (\ref{bin_general_inner}) as \textit{inner} and \textit{outer} objectives respectively.

The key observation underlying our main result is that the inner optimization (\ref{bin_general_inner}) corresponds to the unregularized logistic regression on \textit{separable} data, allowing us to employ the seminal result by \citet{soudry2018implicit}. This work shows that gradient descent, when applied to the task of unregularized logistic regression, outputs iterates that are biased towards the direction of the max-margin hyperplane.
 Evidently, the task encoder $\tau_\theta$ generates labelings of $\mathcal{D}$, which, by definition, are linearly separable in the representation space $\phi(x)$. Consequently, $w_M$ will follow the direction of max-margin hyperplane for a given labeling $\tau_\theta$. In turn, the last point to observe is that substituting the iterates in (\ref{bin_general_outer}), the outer objective is minimized when $w_M$ has a larger margin $\tau_\theta(x) w_M^T \phi(x)$ with respect to $\tau_\theta$. Equipped with this intuition, we are now ready to state our main result:
\begin{proposition}\label{hume_bound}
    Given $M \gg 1$, $\theta \neq 0$ and appropriate step size $\eta$ which ensures convergence, then 
    \begin{align}\label{humebound}
        \mathcal{L}^{\text{binary}}_{M}(\theta) \geq g(\theta) \|w_{\text{SVM}}(\theta)\|_2^2,
    \end{align}
    where $g(\theta) = (M \eta \exp(\|r_M(\theta)\|_2))^{-1}$, the residual $r_M(\theta)$ is bounded with $\lim_{M \to \infty} \|r_M(\theta)\|_2 = 0$, and $w_{\text{SVM}}(\theta)$ is the solution of the hard-margin SVM for a given $\theta$:
    \begin{align}\label{hardmarginsvm}
    \begin{split}
        w_{\text{SVM}}(\theta) = \min_{w} \quad & \|w\|_2^2 \\
        \textrm{s.t.} \quad & \tau_{\theta}(x_n) w^T \phi(x_n) \geq 1 \quad \forall x_n \in \mathcal{D}.
    \end{split}
    \end{align}
\end{proposition}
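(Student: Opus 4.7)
The plan is to exploit the implicit-bias theorem of \citet{soudry2018implicit}, which describes the trajectory of gradient descent on the exponential loss over linearly separable data. Before invoking it I would verify the separability hypothesis: since $\tau_\theta(x) = \sigma(\theta^T\phi(x))$ with $\sigma$ odd, $\mathrm{sign}(\tau_\theta(x_n)) = \mathrm{sign}(\theta^T\phi(x_n))$, so $\theta$ itself separates the labeling that $\tau_\theta$ induces on $\mathcal{D}$ in the representation space $\phi$. In particular the hard-margin SVM problem (\ref{hardmarginsvm}) is feasible and $w_{\text{SVM}}(\theta)$ is well defined.

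Applying Soudry et al.\ to the inner problem (\ref{bin_general_inner}) then yields a decomposition
\[
w_M = \hat{w}(\theta)\log(M\eta) + \tilde{w}(\theta) + r_M(\theta),
\]
where $\hat{w}(\theta)=w_{\text{SVM}}(\theta)$, $\tilde{w}(\theta)$ is the KKT-consistent offset characterized by Soudry et al., and $\|r_M(\theta)\|_2 \to 0$. I would substitute this into the outer objective (\ref{bin_general_outer}): since $\tau_\theta(x_n)\hat{w}^T\phi(x_n) \geq 1$ with equality exactly on the support vectors, each term factorizes as
\[
\exp\!\bigl(-\tau_\theta(x_n)w_M^T\phi(x_n)\bigr) = (M\eta)^{-\tau_\theta(x_n)\hat{w}^T\phi(x_n)}\exp\!\bigl(-\tau_\theta(x_n)(\tilde{w}+r_M)^T\phi(x_n)\bigr),
\]
so non-support-vector terms decay strictly faster than $(M\eta)^{-1}$ and the support-vector contributions dominate for large $M$.

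The last step uses the SVM KKT conditions to identify the support-vector sum with $\|w_{\text{SVM}}\|_2^2$. Stationarity and complementary slackness give $\hat{w} = \sum_{n\in SV}\alpha_n\tau_\theta(x_n)\phi(x_n)$ and $\|\hat{w}\|_2^2 = \sum_{n\in SV}\alpha_n$, while the KKT conditions of the implicit-bias limit identify the dual variables as $\alpha_n = \exp(-\tau_\theta(x_n)\tilde{w}^T\phi(x_n))$. Lower bounding $\mathcal{L}^{\text{binary}}_M(\theta)$ by its support-vector contribution, factoring out $(M\eta)^{-1}\|\hat{w}\|_2^2$, and bounding the leftover deviation by $\exp(-\|r_M\|_2)$ via $|\tau_\theta(x_n)r_M^T\phi(x_n)| \leq \|r_M\|_2 \|\phi(x_n)\|_2$ (absorbing the feature-norm factor into the definition of $r_M$) then yields the claimed inequality $\mathcal{L}^{\text{binary}}_M(\theta) \geq \|w_{\text{SVM}}(\theta)\|_2^2 / (M\eta\exp(\|r_M(\theta)\|_2))$.

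The main obstacle is the residual. Soudry et al.'s original statement only guarantees that the offset $w_M - \hat{w}\log(M\eta)$ stays bounded, not that it converges quantitatively to $\tilde{w}$, so proving $\|r_M(\theta)\|_2 \to 0$ requires invoking sharper implicit-bias refinements and carefully tracking how the additive error in the iterate decomposition translates into the multiplicative $\exp(\|r_M\|_2)$ factor of the bound. Combining this asymptotic KKT equality with a finite-$M$ inequality that is valid uniformly in sufficiently large $M$, and doing so while keeping the residual scalar-normed on a single vector, is the delicate part of the argument.
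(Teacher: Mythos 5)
Your proposal follows essentially the same route as the paper's proof in Appendix~\ref{appendix:proofs}: establish separability via $\theta$ itself, invoke the Soudry et al.\ decomposition of $w_M$, drop non-support-vector terms by nonnegativity, identify the dual coefficients $\alpha_n$ from the KKT condition on $\tilde{w}$, bound the residual factor by $\exp(-\|r_M\|_2)$ via Cauchy--Schwarz (with $\|\phi(x_n)\|_2\le 1$), and use $\sum_{n\in S_\theta}\alpha_n = \|w_{\text{SVM}}(\theta)\|_2^2$. The only cosmetic difference is that you write the logarithmic term as $\log(M\eta)$ while the paper uses $\log M$ and correspondingly keeps the factor $\eta$ in the relation $\eta\exp(-\tau_\theta(x_n)\tilde{w}^T x_n)=\alpha_n$; the two normalizations of $\tilde{w}$ are equivalent and give the same final bound.
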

We defer the proof to Appendix \ref{appendix:proofs}. This result shows that the generalization-based objective upper bounds the norm of hard-margin SVM fitted to a labeling $\tau_\theta$. Consequently, minimizing $\mathcal{L}^{\text{binary}}_{M}$ will inevitably lead to minimizing the norm (\textit{i.e.}, maximizing the margin) \textit{with respect to a labeling}. As a result, the optimization procedure will yield labelings with large margin of the corresponding classifier. Overall, our result unveils that the maximum margin principle \cite{vapnik1995nature}, widely employed by supervised learning algorithms, emerges as the inductive bias of the generalization-based objective (\ref{hume_general}).

\begin{remark}\label{linearseparabilityremark} \textit{(Search space restriction).}
    The result above holds when labelings generated by $\tau_\theta$ are linearly separable in the representation space $\phi(x)$. This assumption leads to the analysis of the generalization-based objective (\ref{hume_general}) with the restricted search space. \citet{ji2019implicit} showed that in the case of non-separable labelings, gradient descent mirrors the separable case, following the max-margin direction of a maximal linearly separable subset of the data. Therefore, one could expect that the lower bound of the generalization-based objective (\ref{hume_general}) optimized over the complete search space inherits these properties, reflecting the separable case. 
\end{remark}
\begin{remark}\label{samplingtraintestremark} \textit{(Train-test split assumption).}
    The generalization-based objective (\ref{hume_general}) assumes different train-test splits $(\mathcal{D}_{tr}, \mathcal{D}_{te})$ on the inner-outer levels respectively to obtain an unbiased estimate of the true risk of a model $f$. In our analysis, we simplify this assumption and employ $\mathcal{D}$ on both levels. Our result shows that minimizing the generalization-based objective in this case leads to maximizing the margin of a linear model with respect to a labeling $\tau$ on $\mathcal{D}$. In turn, this will inevitably lead to low error on a held out data given that margin size upper bounds generalization error \cite{bartlett1999generalization, gronlund2020near}.
\end{remark}
\begin{remark}\label{asymptoticanalysisremark} \textit{(Asymptotic analysis)} Proposition \ref{hume_bound} is rather informal since it substitutes the asymptotic behaviour of the gradient descent iterates $w_{M}$ into the outer objective. Although a rigorous analysis of the residual is required to establish exact bounds, these results serve to grasp the inductive bias incorporated in the generalization-based objective designed for the inference of human labelings.
\end{remark}

In summary, this result shows that optimizing the generalization-based objective (\ref{hume_general}) yields labelings that induce maximal margin classifiers in the representation space $\phi(x)$. Our main result is greatly inspired by the seminal works \cite{soudry2018implicit, ji2019implicit} that reveal the implicit bias of gradient descent towards max-margin solution. Likewise, we demonstrate that the generalization-based objective (\ref{hume_general}) encourages labelings $\tau$ such that if one were to subsequently train a max-margin classifier in the representation space $\phi(x)$ to fit a labeling $\tau$, \textit{the margin obtained would be maximal over all possible labelings}. 
\section{\methodnew Framework}\label{method}
These insights serve us as a guiding principle to develop \methodnew, a general framework for efficient \textit{fully unsupervised transfer} given representations of foundation models.

\xhdr{Optimization objective} Proposition \ref{hume_bound} provides an important insight on the inductive bias incorporated in the generalization-based objective (\ref{hume_general}). Indeed, one can search for the underlying human labeling by maximizing the margin of a linear model with respect to a labeling. Pushing the limits of this principle, we propose to search for a labeling by maximizing margins of linear models \textit{in spaces of multiple foundation models at the same time}. Given $K$ foundation models, let $\phi_k(x)$ be a representation space of $k\text{-th}$ foundation model. Given labeling defined by a task encoder $\tau_{\theta}$, let $w^k_{M}$ be $k\text{-th}$ linear model trained to fit this labeling in a representation space $\phi_k(x)$. Then, \methodnew's optimization objective is as follows:
\begin{align}\label{turtle_instantiation}
\begin{split}
    \mathcal{L}_M^{\text{\methodnew}}(\theta) = & \sum_{k=1}^{K} \sum_{x \in \mathcal{D}} \mathcal{L}_{\text{ce}}(w^k_M \phi_k(x); \tau_\theta(x)) \\
    & \textrm{s.t. } w^k_M = \Xi^{(M)}(w^k_0, \mathcal{D}), \forall k,
\end{split}
\end{align}
where, $\Xi^{M}(w^k_0, \mathcal{D})$ denotes an iterative optimization algorithm $\Xi$ run for $M$ steps starting from $w^k_0$. Intuitively, each of the $K$ terms in the loss function encourages $\tau_\theta$ to maximize margin of $k\text{-th}$ linear model in the corresponding representation space $\phi_k$.  As opposed to the \methodold's objective (\ref{hume_instantiation}), which maximizes margin only in the single space $\psi(x)$, \methodnew provides more effective guidance to the search process.

\xhdr{Task encoder parametrization} The parametrization of a task encoder $\tau_\theta$  defines the search space of labelings, thus it has a crucial importance on the optimization process. In \methodnew, we employ pre-trained representation spaces of foundation models to define a task encoder $\tau_\theta$. These representations \textit{remain fixed} during the overall training procedure, alleviating the need of task-specific representation learning. 

In particular, given $K$ representation spaces $\phi_k(x)$, we define our task encoder $\tau_\theta$ as follows:
\begin{align}\label{turtle_task_parametrization}
\begin{split}
    \tau_{\theta}^{\text{\methodnew}}(x) & = \frac{1}{K} \sum_{k=1}^{K} \tau_{\theta_k}(x), \\
    \textrm{where } \tau_{\theta_k}(x) & = \sigma(\theta_k^T \phi_k(x)),
\end{split}
\end{align}
such that $\theta = \{\theta_1, \dots, \theta_K\}$ denotes all trainable parameters and $\sigma$ is a softmax activation function. After training, cluster assignments are computed as usual:
\begin{align}\label{turtle_cluster_assignments}
 \arg \max\limits_{c = 1, \dots, C} \left[\tau_{\theta}^{\text{\methodnew}}(x)\right]_c,
\end{align}
where $\left[\tau_{\theta}^{\text{\methodnew}}(x)\right]_c$ denotes the probability of assigning a sample $x$ to the $c$-th cluster.

Compared to the \methodold framework in (\ref{hume_task_parametrization}) which searches for the underlying human labeling only over all linearly separable labelings in the self-supervised representation space $\psi(x)$, \methodnew's parametrization greatly expands the search space. Indeed, modeling $\tau_\theta$ as a simple ensemble induces the search space which is at least union of all linearly separable labelings in each of the representation spaces of foundation models $\phi_1, \dots, \phi_K$. One could further suggest employing deeper architectures to model $\tau_\theta$, however such modeling choice may give rise to tasks that capture spurious correlations in data and do not necessarily reflect human labelings \cite{atanov2022task}. Therefore, our design choice effectively increases the search space and alleviates the need of task-specific fine-tuning by employing strong representations of foundation models.

\xhdr{Regularization} The task encoder can synthesize degenerate labelings, \textit{i.e.}, assign all samples to a single class \cite{gadetsky2023pursuit}. Although such labelings induce linear classifiers with the largest possible margin in all representation spaces, they are irrelevant. To avoid such trivial solutions, we separately regularize each term of the task encoder:
\begin{align}\label{turtle_entropy_regularization}
    \mathcal{R}(\theta) = \sum_{k=1}^{K} \mathbb{H}(\overline{\tau}_{\theta_k}^{k}),
\end{align}
where $\overline{\tau}_{\theta_k}^{k} = (|\mathcal{D}|)^{-1} \sum_{x \in \mathcal{D}} \tau_{\theta_k}(x) \in \Delta^{C-1}$ is an empirical label distribution of $k$-th component $\tau_{\theta_k}$ and $\mathbb{H}(\cdot)$ is the entropy function of discrete distribution. 

\xhdr{Final objective function} Putting  (\ref{turtle_instantiation}) and (\ref{turtle_entropy_regularization}) together, \methodnew finally optimizes the following objective function:
\begin{align}\label{turtle_final_objective}
    \min\limits_{\theta} \mathcal{L}_M^{\text{\methodnew}}(\theta) - \gamma \mathcal{R}(\theta),
\end{align}
where we found $\gamma = 10$ is a good default choice for the entropy regularization strength $\gamma$. We show robustness to this hyperparameter in Appendix \ref{app:entropyablation}.

\xhdr{Efficient optimization} The new optimization-based objective (\ref{turtle_instantiation}) is a bilevel optimization problem with the convex inner part. Indeed, given $\tau_\theta$, computing $w^{k}_{M}$ corresponds to the logistic regression problem on $\mathcal{D}$ with labeling $\tau_{\theta}(\mathcal{D})$ in the $k$-th representation space $\phi_k$. Learning parameters $\theta$ using gradient-based techniques involves computing a total derivative $\frac{d }{d \theta} \mathcal{L}^{\text{\methodnew}}_{M}$:
\begin{align}\label{turtle_approx_hypergrad}
    \frac{d}{d \theta} \mathcal{L}^{\text{\methodnew}}_{M} = \frac{\partial }{\partial \theta} \mathcal{L}^{\text{\methodnew}}_{M} + \sum_{k=1}^{K} (\frac{\partial w_M^k}{\partial \theta})^T \frac{\partial }{\partial w_M^k} \mathcal{L}^{\text{\methodnew}}_{M}, 
\end{align}
where $\frac{\partial w_M^k}{\partial \theta}$ is the Jacobian, which is expensive to compute in practice \cite{domke2012generic, shaban2019truncated}. The key observation is that employing the same set of samples $\mathcal{D}$ on both inner and outer levels allows us to discard the second term of the total derivative. Indeed, after training $w^{k}_{M}$ on $\mathcal{D}$, one can approximate $\frac{d}{d \theta} \mathcal{L}^{\text{\methodnew}}_{M} \approx \frac{\partial }{\partial \theta} \mathcal{L}^{\text{\methodnew}}_{M}$ since $w^k_{M}$ is an approximate stationary point of the inner problem for a given $\tau_\theta$, \textit{i.e.}, $\frac{\partial }{\partial w_M^k} \mathcal{L}^{\text{\methodnew}}_{M} \approx 0$. \citet{ablin2020super} have shown a strong performance of this estimator in practice for bilevel optimization problems similar to ours. The pseudocode of \methodnew is provided in Algorithm~\ref{alg:turtle} with implementation details in Appendix \ref{app:implementationdetails}.

\section{Experiments}\label{experiments}
\subsection{Experimental setup}

\xhdr{Datasets and evaluation metric} We study the performance of \methodnew on the extensive benchmark of 26 vision datasets \cite{radford2021learning}. The detailed description of each dataset is provided in Appendix \ref{app:datasets}. We compare our framework with the baselines using accuracy metric and employ Hungarian algorithm \cite{kuhn1955hungarian} to match the labeling found by \methodnew (\ref{turtle_cluster_assignments}) to the ground truth labeling of a corresponding dataset. By default, we train \methodnew on the training split of a corresponding dataset and provide the results on the test split. In Appendix \ref{app:testtestexp}, we additionally show that mimicking deployment regime, \textit{i.e.}, having only test split available for training, does not lead to performance decrease of \methodnew.

\xhdr{Foundation models in \methodnew} We employ CLIP \cite{radford2021learning} representations which span different architectures and model sizes, in particular, $5$ different ResNets (R50, R101, R50x4, R50x16 and R50x64) and $3$ different Vision Transformers (ViT-B/32, ViT-B/16 and ViT-L/14). We refer to the \methodnew as \methodnew 1-space if it utilizes only a single space CLIP representation ($K=1$ in (\ref{turtle_instantiation}) and (\ref{turtle_task_parametrization})). We refer to the  \methodnew as \methodnew 2-spaces if it utilizes two different foundation models. Namely, we use  DINOv2 ViT-g/14 \cite{oquab2023dinov2} as the second space while the first space is always represented with one of the CLIP variants. Consequently, to specify the particular CLIP architecture when utilizing two representation spaces, \textit{e.g.}, ViT-L/14, we refer to \methodnew as \methodnew 2-spaces ViT-L/14.  We precompute all representations for the entire benchmark and keep these representations fixed during the overall training procedure. The detailed description of the used models and other specifications to prepare representations are provided in Appendix \ref{app:representations}.

\xhdr{Baselines} We compare unsupervised transfer using \methodnew to baselines that differ in the amount of supervision they use (Figure \ref{fig:setting_description}). First, we compare \methodnew to \methodold \cite{gadetsky2023pursuit}, a method that has recently shown state-of-the-art unsupervised learning performance and surpassed traditional deep clustering approaches \cite{van2020scan,niu2022spice,amrani2022self,feng2023self}. Next, to explore how far can we go with unsupervised transfer, we compare \methodnew in a challenging setting to zero-shot transfer, unsupervised prompt tuning and supervised baselines. All these baselines use some form of supervision compared to TURTLE which is fully unsupervised. We start by comparing \methodnew to the CLIP zero-shot transfer \cite{radford2021learning} that employs descriptions of ground truth classes as a form of supervision. Following \cite{radford2021learning}, we perform prompt engineering and ensembling to construct a zero-shot classifier for each dataset. As even stronger baselines, we compare \methodnew to the state-of-the-art unsupervised prompt tuning methods UPL \cite{huang2022unsupervised}, POUF \cite{tanwisuth2023pouf} and GDA \cite{wang2024hard}. These approaches enhance class prototypes defined by the CLIP zero-shot classifier via unsupervised adaptation on the downstream task. Finally, we employ supervised linear probe on top of the CLIP representations to serve as a supervised transfer baseline. Differences between types of transfer are highlighted in Table \ref{tab:exp_setting_differences}.

\begin{table}[ht]
\caption{\xhdr{Differences between the considered types of downstream transfer}}
\label{tab:exp_setting_differences}
\resizebox{\columnwidth}{!}{%
\begin{tabular}{lcc}
\toprule
& Available Supervision & Training on $\mathcal{D}$ \\
\midrule
Unsupervised transfer (\textit{ours}) & Number of classes & \cmark \\
Zero-shot transfer & Class descriptions & \xmark \\
Unsupervised prompt tuning & Class descriptions & \cmark \\
Supervised transfer & Labeled samples & \cmark \\
\bottomrule
\end{tabular}
}
\end{table}

\xhdr{Model Selection}
\citet{gadetsky2023pursuit} showed that generalization-based objective (\ref{hume_general}) is strikingly well-correlated with human labelings, which we further confirm in Figure \ref{fig:cross_val} on 26 datasets. Notably, this enables \textit{unsupervised hyperparameter search} in \methodnew. For supervised linear probes, we perform standard cross-validation to search for the L2-regularization strength. We refer the reader to Appendix \ref{app:implementationdetails} for the detailed description of our model selection procedures. Code is publicly available at \url{https://github.com/mlbio-epfl/turtle}.

\subsection{Results}
\xhdr{Comparison to unsupervised baselines} We start by comparing \methodnew to \methodold. Originally, \methodold utilized self-supervised representation learning on the given dataset $\mathcal{D}$ to model the task encoder (\ref{hume_task_parametrization}). To ensure the fair comparison, we instead employ representation spaces of foundation models for both modeling the task encoder (\ref{hume_task_parametrization}) and for representation space used to model $f_{\text{approx}}$ in (\ref{hume_instantiation}). Consequently, both \methodnew and \methodold use the same representation spaces, \textit{i.e.}, CLIP ViT-L/14 and DINOv2. Furthermore, we improve the optimization procedure of \methodold to enable accelerated convergence. In addition, we compare TURTLE to the K-Means clustering \cite{macqueen1967some} on top of concatenated embeddings from both representation spaces employed by \methodnew. The K-Means clustering serves as the simple unsupervised transfer baseline since, like \methodnew, it does not require task-specific representation learning. We refer the readers to Appendix \ref{appendix:unsuperviseddetails} for the detailed description of the improvements made to \methodold as well as the implementation details of the K-Means.

As shown in Figure \ref{fig:hume_vs_turtle_acc}, \methodnew substiantially outperforms \methodold on all considered datasets, confirming that maximizing margin in both spaces simultaneously to search for the underlying human labeling (\ref{turtle_instantiation}) and expanding the search space of labelings (\ref{turtle_task_parametrization}) is the effective design choice. Remarkably, TURTLE leads to $23\%$ and $11\%$ absolute improvements ($30\%$ and $18\%$ relative improvement) on the MNIST and Birdsnap datasets, respectively. Furthermore, among other datasets, \methodnew sets the new state-of-the-art unsupervised performance on the ImageNet dataset, achieving $72.9\%$ accuracy and outperforming the previous state-of-the-art \cite{alkin2024mim} by $5.5\%$.

\begin{figure}[ht]
\centering
\includegraphics[width = \linewidth]{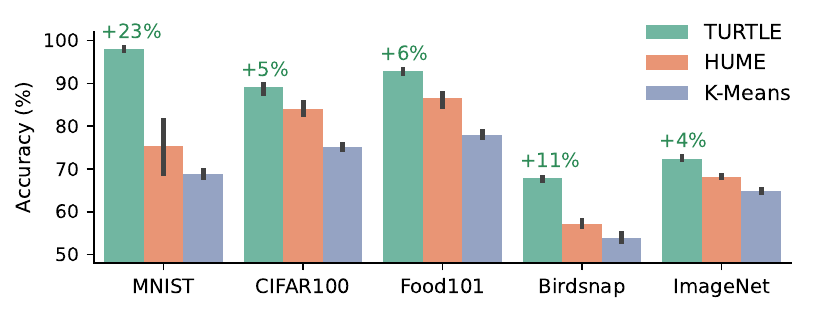}
\vspace*{-8mm}
\caption{
\textbf{\methodnew outperforms unsupervised baselines.} Comparison of TURTLE to unsupervised baselines with respect to accuracy. All methods use the CLIP ViT L/14 and DINOv2 representations. Bars represent the average performance with standard deviations computed over three runs.
}
\label{fig:hume_vs_turtle_acc}
\end{figure}

In addition, we validate optimization efficiency of \methodnew and compare training time between all the considered methods in Figure \ref{fig:hume_vs_turtle_time}. The results corroborate the use of first-order hypergradient approximation (\ref{turtle_approx_hypergrad}) in \methodnew. Notably, \methodnew achieves $10\times$ speedup compared to \methodold, achieving the impressive training time on the ImageNet dataset that takes less than five minutes. Overall, our results show that \methodnew effectively addresses the challenges of unsupervised transfer and outperforms unsupervised baselines by a large margin.

\begin{figure}[h]
\vspace*{-3mm}
\centering
\includegraphics[width=\linewidth]{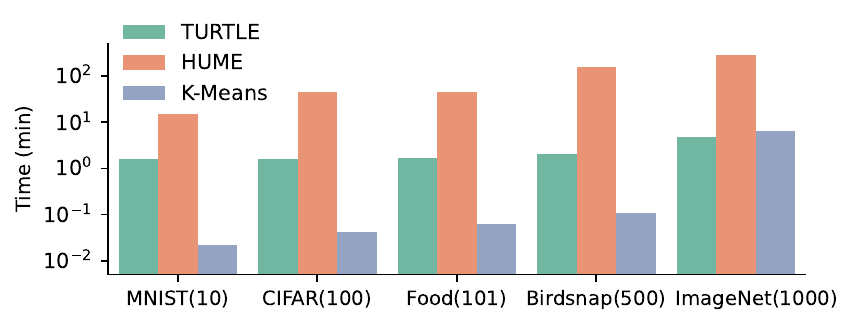}
\vspace*{-9mm}
\caption{
\textbf{\methodnew is an efficient method.} Comparison of running time between TURTLE and unsupervised baselines. TURTLE employs efficient first-order optimization procedure, achieving more than \textbf{10$\times$ speedup} compared to HUME.  All methods use CLIP ViT L/14 and DINOv2 representations. Bars represent the average performance over three runs. Standard deviations are negligible (Table \ref{tab:unsupervised_time}) and omitted for clarity.}
\label{fig:hume_vs_turtle_time}
\end{figure}

\xhdr{Comparison to zero-shot transfer} We compare \methodnew to the CLIP zero-shot transfer that uses descriptions of ground truth classes as a form of supervision. Remarkably, without using any supervision, \methodnew 2-spaces outperforms the zero-shot transfer of CLIP by a large margin across 26 benchmark datasets for different ViT backbones (Figure \ref{fig:intro_plot_scaling_law_turtle_vits}). 

\begin{figure}[h]
\vspace*{-3mm}
\centering
\includegraphics[width =\linewidth]{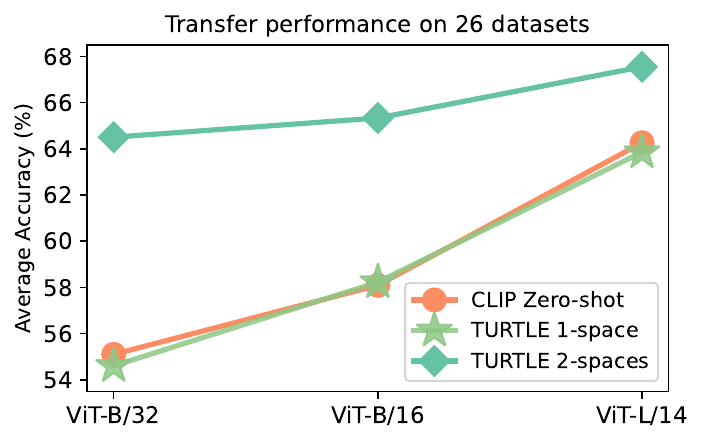}
\vspace*{-9mm}
\caption{\xhdr{\methodnew enables unsupervised transfer given representation spaces of foundation models} Employing the same CLIP representation space, \methodnew closely matches the performance of the corresponding CLIP zero-shot classifier on average over 26 datasets. With the use of an additional representation space, \methodnew outperforms zero-shot transfer, demonstrating exceptional abilities of unsupervised transfer learning.}
\label{fig:intro_plot_scaling_law_turtle_vits}
\end{figure}

\begin{figure}[t]
\centering
\includegraphics[width = \linewidth]{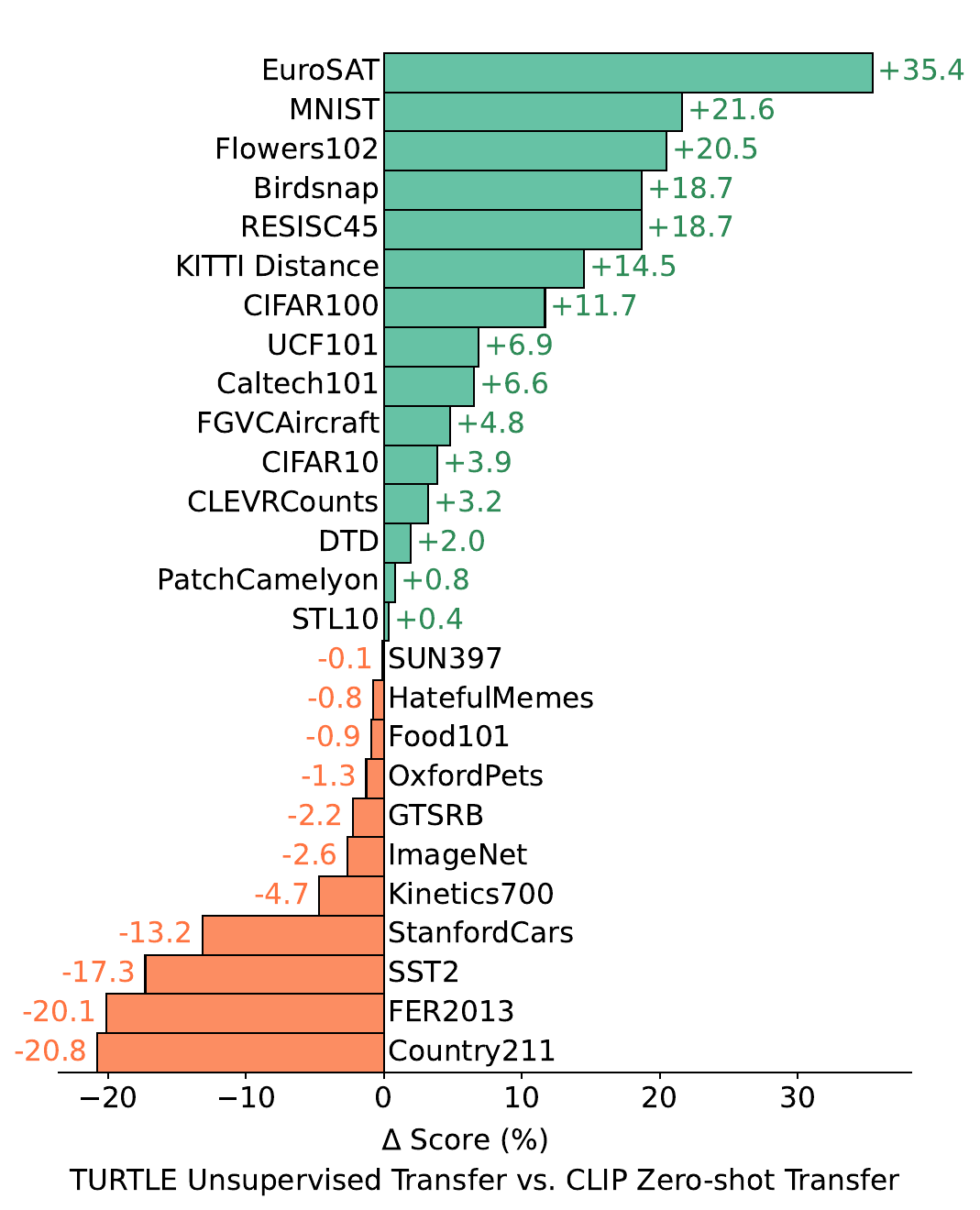}
\vspace*{-9mm}
\caption{\xhdr{\methodnew outperforms the CLIP zero-shot classifier on 15 out of 26 datasets} TURTLE is trained with CLIP ViT-L/14 and DINOv2 representations. CLIP zero-shot utilizes the same CLIP ViT-L/14 architecture. Furthermore, we observe that even with only a single CLIP representation space TURTLE outperforms CLIP on $13/26$ datasets (Figure \ref{fig:turtle_clip_zs}).} \label{fig:turtle_clipdino_zs}
\vspace*{-7mm}
\end{figure}

In particular, \methodnew 2-spaces outperforms CLIP zero-shot by $9\%$, $7\%$ and $4\%$ absolute improvement ($17\%$, $12\%$ and $5\%$ relative improvement) with ViT-B/32, ViT-B/16 and ViT-L/14 backbones, respectively.  Moreover, even \methodnew  1-space matches the performance of CLIP zero-shot across all studied ViT models.  It is important to note that both CLIP zero-shot and \methodnew 1-space are linear models in the same representation space and \textit{differ only in the amount of supervision} which is available to produce the weights. When comparing performance on individual datasets, \methodnew outperforms CLIP zero-shot transfer on $15$ out of $26$ datasets with remarkable absolute gains of $35\%$, $21\%$ and $20\%$ on the EuroSAT, MNIST and Flowers102 datasets, respectively (Figure \ref{fig:turtle_clipdino_zs}). We provide individual scores for all \methodnew and CLIP zero-shot variants in Appendix \ref{appendix:allnumericresults}. 

\begin{table*}[t]
\centering
\caption{\xhdr{\methodnew 2-spaces outperforms unsupervised prompt tuning methods} \textit{ZS} column indicates whether method utilizes zero-shot supervision to make predictions. All methods employ CLIP ResNet-50 representations. \methodnew additionally uses DINOv2 representations as the second representation space.}
\tabcolsep=0.1cm
\begin{tabular}{cc|rccccccccccc}
\midrule[1pt]
 \bf Method & \bf \ \ ZS\ \ & \ \ \bf Pets  & \bf  Flowers & \bf  FGVC & \bf  DTD & \bf  EuroSAT & \bf  Cars & \bf  Food & \bf SUN & \bf  Caltech & \bf  UCF & \bf  ImageNet & \bf   Avg.\\
 \midrule
POUF &\cmark & 88.0 & 66.7 & 16.7 & 41.5 & 42.1 & 57.4 & 74.7 & 58.6 & 86.9 & 61.1 & 55.2 & 59.0 \\ 
UPL &\cmark & 88.3 & 68.9 & 17.3 & 46.6 & 54.8 & \bf 62.1 & 77.6 & 64.0 & \bf89.9 & 67.2 & 60.5 & 
 63.4 \\
GDA  &\cmark & 89.9 & 72.7 & 18.7 & 46.8 & 49.9 & 60.8 & 78.3 & 63.6 & 87.5 & 68.7 & 61.2 & 63.5 \\
\midrule
TURTLE  &\xmark &\bf90.9 &\bf99.7	&\bf25.3	&\bf57.0	&\bf95.5	&32.6	&\bf84.1	&\bf65.7	&88.6	&\bf77.7	&\bf66.3	&\bf71.2 \\
\midrule[1pt]
\end{tabular}
\label{tab:prompttuningtable}
\vspace*{-3mm}
\end{table*}

\xhdr{Comparison to unsupervised prompt tuning} Next, we compare \methodnew to unsupervised prompt tuning baselines. We follow previous works and use CLIP ResNet-50 representations for all methods. Although being fully unsupervised, \methodnew consistently outperforms all the considered baselines by a large margin (Table \ref{tab:prompttuningtable}). Specifically, \methodnew achieves $8\%$ absolute improvement ($12\%$ relative improvement) in average accuracy over the best unsupervised prompt tuning baseline. On the Flowers102 and EuroSAT datasets, our framework attains outstanding absolute gains of $27\%$ and $41\%$ ($37\%$ and $75\%$ relative improvement), respectively. Overall, these results demonstrate the surprising effectiveness of the unsupervised transfer. 

\xhdr{Comparison to supervised transfer} Finally, we compare \methodnew 1-space ViT-L/14 to supervised linear probe in the same representation space. This means that in this setup both models are linear in the representation space of CLIP ViT-L/14 and differ only in the amount of supervision utilized to produce the weights. Supervised linear probe is trained using all available labels. Consequently, we can assume that it represents the maximal transfer learning performance that can be achieved by the unsupervised transfer. 
 
We observe a high positive correlation of $0.87$ (p-value $< 10^{-8}$) between unsupervised transfer performance and its fully supervised counterpart (Figure \ref{turtle_linear_datasets}). This result indicates that with better supervised linear probe performance, \methodnew's performance may also increase, which we further investigate in the subsequent paragraph. Notably, \methodnew approaches the ``optimal`` transfer performance on the STL10, CIFAR10, Flowers102, Food101 and HatefulMemes, demonstrating that \textit{labels may not be needed} when given sufficiently high-quality representations, as measured by supervised linear probe. We perform similar analysis for \methodnew 2-spaces and observe stronger correlation, leading to reduced gap between \methodnew 2-spaces and supervised linear probe (Figure \ref{turtle_linear_datasets_multiple}).

\begin{figure}[t]
\centering
\includegraphics[width = \linewidth]{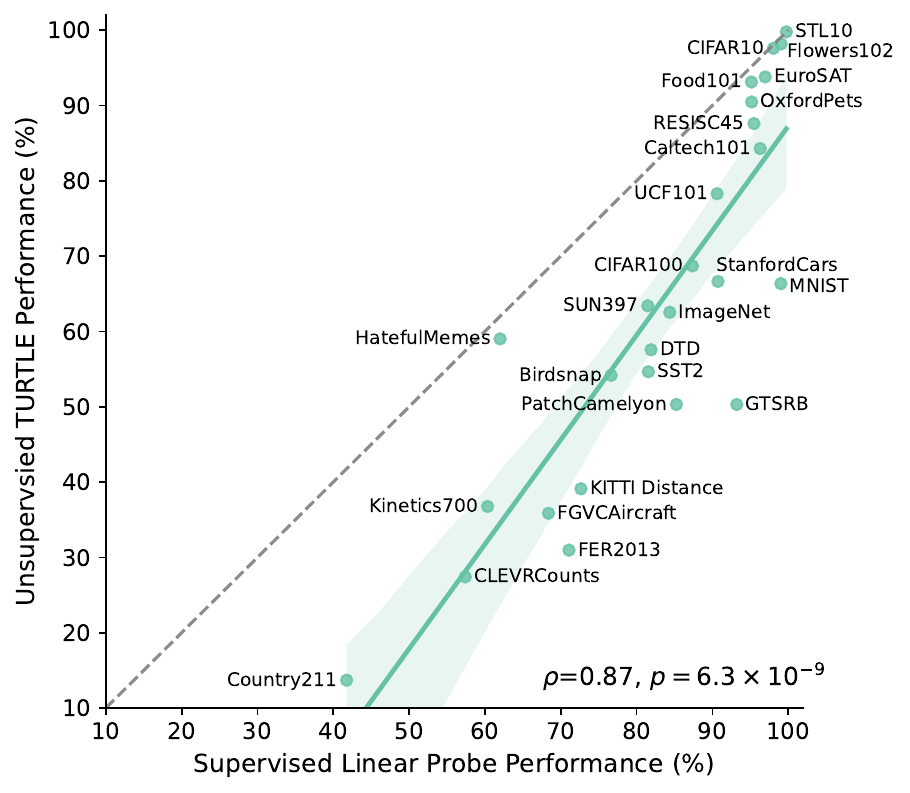}
\vspace*{-10mm}
\caption{\xhdr{Unsupervised transfer performance of \methodnew is correlated with supervised linear probe performance} Dashed line $y=x$ denotes the ``optimal" unsupervised transfer. The performance of \methodnew and supervised linear probe shows a strong correlation ($\rho=0.87, p=6.3\times 10^{-9}$ of two-sided Pearson correlation coefficient). On $5$ datasets \methodnew approaches the performance of the ``optimal" unsupervised transfer ($\leq 3$ point difference).} \label{turtle_linear_datasets}
\vspace*{-5mm}
\end{figure}

\begin{figure}[t]
\centering
\includegraphics[width = 0.65\linewidth]{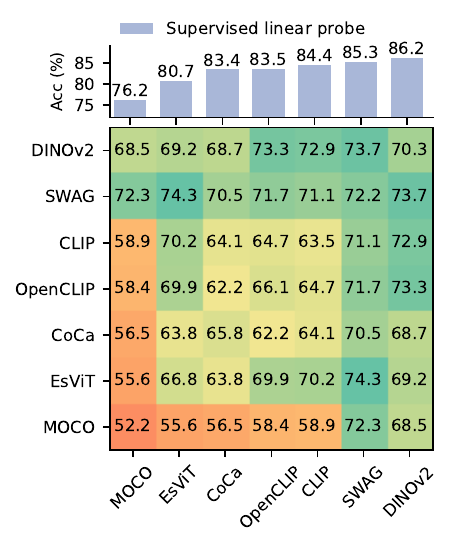}
\vspace{-7mm}
\caption{\textbf{Top}: Supervised linear probe on the ImageNet-1000 dataset for $7$ different representation spaces. \textbf{Bottom}: Heat map represents unsupervised performance of \methodnew on ImageNet-1000. Secondary diagonal cells correspond to \methodnew 1-space, while off-diagonal cells refer to \methodnew 2-spaces with the pair of corresponding representation spaces. The performance of \methodnew indicates a strong positive correlation with the performance of supervised linear probe ($\rho=0.74, p=1.4\times 10^{-9}$ of two-sided Pearson correlation coefficient).} 
\label{fig:turtle_imagenet}
\vspace*{-8mm}
\end{figure}

\xhdr{Ablation of different representation spaces on ImageNet} Results from the previous paragraph speculate that incorporating stronger representations may lead to the increased performance of unsupervised transfer. To validate this, we run \methodnew with pairs of different representation spaces on the ImageNet-1000 dataset \cite{deng2009imagenet}. Results in Figure \ref{fig:turtle_imagenet} show a positive correlation of $0.74$ (p-value $< 10^{-8}$) between unsupervised transfer performance and the quality of representations measured by supervised linear probe. The obtained result confirms that employing stronger representations for a given dataset leads to the improved performance of \methodnew. Consequently, \methodnew can further improve performance by exploiting continual progress in the development of foundation models. Furthermore, given high positive correlation between \methodnew's accuracy and the generalization-based objective (Figure \ref{fig:cross_val}), \methodnew can be utilized as the proxy to measure the quality of given representations in the absence of labels for the downstream task. 
\section{Related Work}\label{relatedwork}

\xhdr{(Weakly) supervised transfer} (Weakly) supervised transfer approaches require at least some amount of supervision to perform downstream transfer. For instance, BigTransfer \cite{kolesnikov2020big} showed that supervised fine-tuning of the entire model after large-scale pre-training successfully transfers knowledge in both fully supervised and few-shot regimes. Recent advances in self-supervised learning \cite{he2022masked, li2022efficient, zhou2022ibot, oquab2023dinov2, darcet2024vision} have demonstrated that a linear probe suffices to achieve competitive performance compared to the fine-tuning the entire model. Despite the strength of these approaches, they necessitate labeled examples to perform downstream transfer. 

\xhdr{Zero-shot transfer} Foundation models such as CLIP \cite{radford2021learning} have recently enabled zero-shot transfer, which relies only on a set of human instructions such as descriptions of visual categories that appear in the data rather than a set of labeled examples. Despite the success of zero-shot transfer in different domains \cite{elizade2022clap, elizade2023natural, lin2023pmc, robinson2023contrasting, meidani2024snip}, collecting zero-shot annotations still requires expert domain knowledge which can be hard to get in many real-world applications. In contrast to the zero-shot transfer approaches, \methodnew enables \textit{fully unsupervised transfer}, effectively alleviating the need of any human guidance. 

\xhdr{Deep clustering} Deep clustering methods \cite{xie2016unsupervised,chang2017deep, caron2019deep, van2020scan, niu2022spice} aim to jointly perform deep representation learning and clustering on a target dataset. Recent state-of-the-art approaches \cite{van2020scan, niu2022spice} rely on time-consuming three-stage procedures that involve self-supervised representation learning, clustering and fine-tuning via self-labeling respectively. In contrast to the deep clustering approaches, \methodnew alleviates the need for laborious task-specific representation learning by employing representation spaces of pre-trained foundation models. Furthermore, compared to deep clustering methods that heavily depend on image augmentations to induce semantically meaningful clusters, \methodnew builds upon the seminal maximum margin principle that is effortlessly applicable beyond image data modality. Consequently, our approach offers an efficient and effective way to perform fully unsupervised transfer from foundation models.

\xhdr{Maximum margin clustering} Our work has revealed that optimizing the generalization-based objective proposed in \citet{gadetsky2023pursuit} results in the search for a labeling that maximizes the margin of a maximal margin classifier over all possible labelings of a dataset. The first attempt to employ maximum margin principle to perform clustering dates back to Maximum Margin Clustering (MMC) \cite{xu2004maximum}. Later works extended this framework to multi-class clustering \cite{xu2005unsupervised, wang2010linear}, multi-view clustering \cite{zhao2009multiple}, or focused on improving the scalability \cite{zhang2007maximum, wang2010linear}. Compared to \methodnew, which employs efficient first-order gradient optimization techniques, the aforementioned approaches rely on the expensive discrete optimization techniques. Furthermore, each of the approaches adopts maximum margin principle in its own way to enable multi-class or multi-space scenario, while \methodnew provides a unified framework for any number of classes and representation spaces.

\xhdr{Implicit bias of optimization algorithms} Understanding the implicit bias of optimization algorithms plays a crucial role in modern machine learning. The seminal work by \citet{soudry2018implicit} showed that the gradient descent, when applied to the task of unregularized logistic regression on separable data, converges to the direction of the maximal margin hyperplane without explicitly enforcing such margin maximization. Later, \citet{ji2019implicit} extended the analysis and demonstrated a similar behavior of gradient descent in the case of non-separable data. In our work, we employ the aforementioned findings to study the inductive bias of the generalization-based objective. Surprisingly, we reveal that it yields labelings that maximize the margin of a maximal margin classifier with respect to labeling. As a result, this insight allows us to develop \methodnew, a method that enables fully unsupervised transfer given representations of foundation models.
\section{Conclusion} \label{conclusion}
In this work, we have shown that the representations of foundation models can be utilized to solve a new task in a fully unsupervised manner. The key insight behind our approach is to search for a labeling that induces maximal margin classifiers in the representation spaces of foundation models. We utilize this insight to develop \methodnew, a general framework for effective unsupervised transfer given representations of different foundation models. Through extensive evaluation, we found that \methodnew, being fully unsupervised, achieves competitive performance compared to zero-shot transfer by employing only a single representation space. Furthermore, utilizing an additional representation space results in remarkable gains over zero-shot transfer. Given the flexibility of our framework, the results also suggest that \methodnew can deliver even better unsupervised transfer performance by taking advantage of new more powerful foundation models that will emerge in the future.

\section*{Acknowledgements}
We thank Chanakya Ekbote, Shuo Wen and Tingyang Yu for valuable suggestions that helped to improve the clarity of the manuscript. We also thank Nikita Doikov for fruitful discussions regarding efficient bilevel optimization techniques. We gratefully acknowledge the support of EPFL and ZEISS.
\section*{Impact Statement}
Although the main goal of our work is to advance the field of Machine Learning, the proposed framework relies on representation spaces of foundation models. These models inherit biases embedded in the data on which they were trained on \cite{bommasani2022opportunities}. Consequently, the extensive evaluation and alignment is recommended when deploying \methodnew to critical use-cases such as medicine.
\bibliography{references}
\bibliographystyle{icml2024}

\newpage
\appendix
\onecolumn
\section{Proof of Proposition \ref{hume_bound}}\label{appendix:proofs}
\renewcommand{\thetable}{\Alph{section}\arabic{table}}
\renewcommand\thefigure{\Alph{section}\arabic{figure}} 
\renewcommand\thealgorithm{\Alph{section}\arabic{algorithm}}
\renewcommand{\theHtable}{\Alph{section}\arabic{table}}
\renewcommand\theHfigure{\Alph{section}\arabic{figure}} 
\renewcommand\theHalgorithm{\Alph{section}\arabic{algorithm}}
\setcounter{table}{0}
\setcounter{figure}{0}
\setcounter{algorithm}{0}
Here, we first provide the simplified version of the main results from \citet{soudry2018implicit} for completeness and then present the proof of Proposition \ref{hume_bound}. For clarity, we overload notation for $x_n$ and consider $x_n$ is already represented in a representation space $\phi(x)$, \textit{i.e.}, $x_n = \phi(x_n)$.
Given binary labeling function $\tau(x) \in \{-1, +1\}$ of the dataset $\mathcal{D}=\{x_n\}_{n=1}^{N}$, let $\mathcal{L}(w)$ be the exponential loss function:
\begin{equation}\label{exp_loss}
    \mathcal{L}(w) = \sum_{n=1}^{N} \exp(-\tau(x_n) w^T x_n)
\end{equation}
\begin{assumption}\label{linearseparability}
    (\textit{Linear separability}) The dataset $\mathcal{D}$ is linearly separable: $\exists w_{*} \in \mathbb{R}^{d} \textrm{ s.t. } \tau(x_n) w^T_{*} x_n > 0$ for all $x_n \in \mathcal{D}$.
\end{assumption}
We consider minimizing (\ref{exp_loss}) using gradient descent with a step size $\eta$:
\begin{align}\label{exp_loss_gd}
    w_{m} = w_{m-1} - \eta \nabla_w \mathcal{L}(w_{m-1})
\end{align}
Let $w_{\text{SVM}}$ denote the \textit{primal} solution to the hard margin SVM problem:
\begin{align}\label{hard_margin_svm}
\begin{split}
    w_{\text{SVM}} = \min_{w} \quad & \|w\|_2^2 \\
    \textrm{s.t.} \quad & \tau(x_n) w^T x_n \geq 1 \quad \forall x_n \in \mathcal{D}.
\end{split}
\end{align}
Let $\alpha_{\text{SVM}}$ denote the \textit{dual} solution to the hard margin SVM problem:
\begin{align}\label{hard_margin_svm_dual}
\begin{split}
    \alpha_{\text{SVM}} = \max_{\alpha} \quad &  \sum_{n=1}^{N} \alpha_n - \frac{1}{2} \sum_{i=1}^{N} \sum_{j=1}^{N} \alpha_{i} \alpha_{j} \tau(x_i) \tau(x_j) x_i^T x_j \\
    \textrm{s.t.} \quad & \alpha_{n} \geq 0 \quad \forall n,
\end{split}
\end{align}
where primal and dual variables are related as $w_{\text{SVM}} = \sum_{n=1}^{N} (\alpha_{\text{SVM}})_n \tau(x_n) x_n$ \cite{vapnik1995nature}.
\begin{assumption}\label{supportvectorspan}
    (\textit{Non-degenerate dataset}) Support vectors $S=\{x_n \in \mathcal{D} | \tau(x_n) w_{\text{SVM}}^T x_n = 1\}$ span the data, \textit{i.e.}, $\textrm{rank}(\mathcal{D}_{S}) = \textrm{rank}(\mathcal{D})$, where $\mathcal{D}_{S}$ is a matrix whose columns are $x_n \in S$. Furthermore, for each $x_n \in S$, the corresponding dual variables are strictly positive, \textit{i.e.}, $(\alpha_{\text{SVM}})_n > 0$, and the rest are zero.
\end{assumption}
After above specifications, the simplified version of the seminal result by \citet{soudry2018implicit} is:
\begin{proposition}\label{app:implicitbiasgd}
    \textit{(Implicit Bias of Gradient Descent, \citet{soudry2018implicit})} For almost any non-degenerate (Assumption \ref{supportvectorspan}) dataset which is linearly separable (Assumption \ref{linearseparability}), any starting point $w_0$ and step size $\eta < 1 / \mathcal{L}(w_0)$, the gradient descent iterates (\ref{exp_loss_gd}) will behave as:
    \begin{equation}
        w_{m} = w_{\text{SVM}} \log m + \tilde{w} + r_{m},
    \end{equation}
    where $w_{\text{SVM}}$ is the max-margin vector (\ref{hard_margin_svm}), $\tilde{w}$ is a solution to:
    \begin{equation}\label{wtilde}
        \forall x_n \in S\ : \ \eta \exp(-\tau(x) \tilde{w}^T x_n) = (\alpha_{\text{SVM}})_n,
    \end{equation}
    and the residual $r_m$ is bounded with $\lim_{m \to \infty} \|r_m\|_2 = 0$
\end{proposition}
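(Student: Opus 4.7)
The plan is to reconstruct the argument of \citet{soudry2018implicit} in four stages: (i) prove monotone loss decrease and convergence $\mathcal{L}(w_m) \to 0$, (ii) show $\|w_m\|_2 \to \infty$ with direction aligning to $\hat w_{\text{SVM}} := w_{\text{SVM}} / \|w_{\text{SVM}}\|_2$, (iii) identify the logarithmic growth rate of $w_m$ along $\hat w_{\text{SVM}}$, and (iv) identify the $O(1)$ correction $\tilde w$ and prove decay of the residual.

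For stage (i), the key observation is that the exponential loss is \emph{self-bounded smooth}: the Hessian $\nabla^2 \mathcal{L}(w) = \sum_n x_n x_n^T \exp(-\tau(x_n) w^T x_n)$ has spectral norm at most $\mathcal{L}(w) \cdot \max_n \|x_n\|_2^2$. Combined with the hypothesis $\eta < 1/\mathcal{L}(w_0)$ (absorbing the $\max_n \|x_n\|_2^2$ constant into a rescaling) and the standard descent lemma, an induction shows $\mathcal{L}(w_{m+1}) \leq \mathcal{L}(w_m)$, so that the smoothness bound is preserved along the trajectory. Telescoping the descent inequality yields $\sum_m \|\nabla \mathcal{L}(w_m)\|_2^2 < \infty$, hence $\nabla \mathcal{L}(w_m) \to 0$. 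Under Assumption \ref{linearseparability}, $\mathcal{L}$ has no finite critical points and $\inf \mathcal{L} = 0$, so $\mathcal{L}(w_m) \to 0$ and $\|w_m\|_2 \to \infty$.

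For stage (ii), since the exponential weights $\exp(-\tau(x_n) w_m^T x_n)$ concentrate on the points achieving the smallest margin, any cluster direction $\hat u$ of $w_m / \|w_m\|_2$ must satisfy $\tau(x_n) \hat u^T x_n \geq 1 / \|w_{\text{SVM}}\|_2$ uniformly (otherwise a violating point would dominate the gradient and force correction). This forces $\hat u = \hat w_{\text{SVM}}$ by uniqueness of the max-margin direction. For stage (iii), decomposing $w_m = \rho_m w_{\text{SVM}} + v_m$ with $v_m \perp w_{\text{SVM}}$ and using that support vectors attain $\tau(x_n) w_{\text{SVM}}^T x_n = 1$ exactly, the projection of $-\nabla \mathcal{L}(w_m)$ onto $w_{\text{SVM}}$ is asymptotically $\exp(-\rho_m) \sum_{n \in S} \tau(x_n) (x_n^T \hat w_{\text{SVM}})$ up to exponentially smaller contributions from non-support vectors; this yields the scalar recurrence $\rho_{m+1} - \rho_m = \Theta(\exp(-\rho_m))$, whose continuous-time analog $\dot\rho = c e^{-\rho}$ integrates to $\rho_m = \log m + O(1)$.

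Stage (iv) is the main obstacle. Substituting the ansatz $w_m = w_{\text{SVM}} \log m + \tilde w + r_m$ into the gradient recursion and matching leading orders, the contribution of each support vector $x_n \in S$ to $-\eta \nabla \mathcal{L}(w_m)$ produces the relation $\eta \exp(-\tau(x_n) \tilde w^T x_n) = (\alpha_{\text{SVM}})_n + o(1)$, which is precisely (\ref{wtilde}). Assumption \ref{supportvectorspan} is crucial here: $(\alpha_{\text{SVM}})_n > 0$ makes each log well defined, and the support vectors spanning $\mathcal{D}$ guarantees that the system of equations pins down $\tilde w$ on the data subspace. The remainder then satisfies a recursion of the form $r_{m+1} = r_m + \eta \sum_n \tau(x_n) x_n \exp(-\tau(x_n) w_m^T x_n) - (w_{\text{SVM}} \log\frac{m+1}{m})$; bounding the non-support-vector contributions by $\exp(-(1+\delta)\rho_m)$ for some margin gap $\delta > 0$ and using Taylor expansion of $\log\frac{m+1}{m}$ gives $\|r_{m+1} - r_m\|_2 = O(1/(m \log m))$ plus a contractive component on the support-vector subspace, from which $\|r_m\|_2 \to 0$ follows. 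The delicate step is controlling the coupling between the $v_m$ direction and $r_m$, which requires the full rank condition from Assumption \ref{supportvectorspan} to prevent $\tilde w$ from being underdetermined on the data span.
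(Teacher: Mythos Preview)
The paper does not prove this proposition. It is stated in Appendix~A explicitly ``for completeness'' as a simplified restatement of the main theorem of \citet{soudry2018implicit}, and is then used as a black box to derive the paper's own lower bound (Proposition~\ref{app:humesinglespacemargin}). There is no proof environment for Proposition~\ref{app:implicitbiasgd} anywhere in the paper; it is a cited result, not one the authors establish.

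Your four-stage sketch is a reasonable outline of the Soudry et al.\ argument, and stages (i)--(iii) capture the right mechanisms: self-bounded smoothness for monotone descent, concentration of the exponential weights on minimum-margin points for directional convergence, and the scalar ODE $\dot\rho = c e^{-\rho}$ heuristic for the $\log m$ rate. Stage (iv), however, has a genuine gap. You claim $\|r_{m+1} - r_m\|_2 = O(1/(m\log m))$, but $\sum_{m\geq 2} 1/(m\log m)$ diverges (like $\log\log m$), so summability of increments cannot be what forces $\|r_m\|_2 \to 0$. You allude to a ``contractive component on the support-vector subspace,'' which is indeed the real driver, but you do not explain the mechanism: after substituting the ansatz, the recursion for $r_m$ restricted to the support-vector span linearizes to a map whose fixed point is zero precisely because $\tilde w$ is defined by (\ref{wtilde}), and the non-degeneracy assumption (strict positivity of the dual variables and full rank of the support vectors) is what makes this linearization nonsingular and contractive at the $1/m$ scale. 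Without spelling out that contraction, the argument does not close. Since the paper itself simply cites the result, any complete proof here is going beyond what the paper provides.
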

Equipped with this result, we analyze the generalization-based objective:
\begin{align}
        \mathcal{L}^{\text{binary}}_{M}(\theta) =  & \sum_{x \in \mathcal{D}} \exp(-\tau_\theta(x) w_{M}^T \phi(x)) \label{bin_general_outer_theta} \\
        & \textrm{s.t. } w_M = \Xi^{(M)}(w_0, \mathcal{D}), \label{bin_general_inner_theta}
\end{align}
where $\tau_\theta(x)=\sigma(\theta^T x)$ is the task encoder with an odd activation function such as $\tanh$ and $w_M = \Xi^{(M)}(w_0, \mathcal{D})$ denotes $M$ steps of gradient descent with step size $\eta$ and labeling defined by $\tau_\theta$. Without loss of generality, we can assume $\|x_n\|_2 \leq 1, \forall x_n \in \mathcal{D}$. Given the above specifications, we are now ready to state our main result:
\begin{proposition}\label{app:humesinglespacemargin}
    \textit{(Lower bound for the generalization-based objective)} Following the assumptions of Proposition \ref{app:implicitbiasgd}, given $M \gg 1$ and $\theta \neq 0$, we have:
    \begin{align}
        \mathcal{L}^{\text{binary}}_{M}(\theta) \geq g(\theta) \|w_{\text{SVM}}(\theta)\|_2^2,
    \end{align}
    where $g(\theta) = (M \eta \exp(\|r_M(\theta)\|_2))^{-1}$, the residual $r_M(\theta)$ is bounded with $\lim_{M \to \infty} \|r_M(\theta)\|_2 = 0$, and $w_{\text{SVM}}(\theta)$ is the solution of the hard-margin SVM for a given $\theta$:
    \begin{align}
    \begin{split}\label{hardmarginsvmfortheta}
        w_{\text{SVM}}(\theta) = \min_{w} \quad & \|w\|_2^2 \\
        \textrm{s.t.} \quad & \tau_{\theta}(x_n) w^T x_n \geq 1 \quad \forall x_n \in \mathcal{D}.
    \end{split}
    \end{align}
\end{proposition}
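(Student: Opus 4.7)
The plan is to invoke the implicit-bias theorem of Soudry et al.\ (stated above as Proposition~\ref{app:implicitbiasgd}) to obtain an explicit large-$M$ expansion of the inner iterates $w_M(\theta)$, substitute it into the outer exponential loss (\ref{bin_general_outer_theta}), and then collapse the resulting sum into $\|w_{\text{SVM}}(\theta)\|_2^2$ via strong duality of the hard-margin SVM (\ref{hardmarginsvmfortheta}).

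First, I would observe that since $\tau_\theta(x) = \sigma(\theta^T x)$ is built from an odd nonlinearity applied to a linear functional, the induced sign pattern $\mathrm{sign}(\tau_\theta(x_n))$ is linearly separable in the representation space. Absorbing the soft label into a reweighted datum $\tilde{x}_n := \tau_\theta(x_n) x_n$ rewrites the inner loss as $\sum_n \exp(-w^T \tilde{x}_n)$, a standard unregularized exponential loss for which Assumption~\ref{linearseparability} holds with certificate $w = \theta$. Proposition~\ref{app:implicitbiasgd} then applies and yields, for $M \gg 1$,
\begin{equation*}
w_M(\theta) = w_{\text{SVM}}(\theta) \log M + \tilde{w}(\theta) + r_M(\theta), \qquad \|r_M(\theta)\|_2 \xrightarrow{M \to \infty} 0,
\end{equation*}
with $\tilde{w}(\theta)$ satisfying the KKT relation (\ref{wtilde}) on the support set $S = \{x_n : \tau_\theta(x_n) w_{\text{SVM}}(\theta)^T x_n = 1\}$.

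Substituting this expansion into each summand of (\ref{bin_general_outer_theta}) gives the factorization
\begin{equation*}
\exp(-\tau_\theta(x_n) w_M(\theta)^T x_n) = M^{-\tau_\theta(x_n) w_{\text{SVM}}(\theta)^T x_n} \cdot \exp\!\bigl(-\tau_\theta(x_n) \tilde{w}(\theta)^T x_n\bigr) \cdot \exp\!\bigl(-\tau_\theta(x_n) r_M(\theta)^T x_n\bigr).
\end{equation*}
I would then split the outer sum into $S$ and its complement. Off $S$ the margin strictly exceeds $1$, making the polynomial factor $o(M^{-1})$, so those terms can be dropped when lower bounding. On $S$ the polynomial factor equals exactly $M^{-1}$, relation (\ref{wtilde}) replaces the second factor by $(\alpha_{\text{SVM}}(\theta))_n / \eta$, and Cauchy--Schwarz together with $|\tau_\theta(x_n)| \leq 1$ and $\|x_n\|_2 \leq 1$ lower bounds the third by $\exp(-\|r_M(\theta)\|_2)$. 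The primal--dual identity $\|w_{\text{SVM}}(\theta)\|_2^2 = \sum_{x_n \in S} (\alpha_{\text{SVM}}(\theta))_n$ --- an immediate consequence of expanding $w_{\text{SVM}}(\theta) = \sum_n (\alpha_{\text{SVM}}(\theta))_n \tau_\theta(x_n) x_n$ against itself and using unit margin on $S$ with vanishing multipliers off $S$ --- then collapses the sum to the claimed bound with $g(\theta) = (M \eta \exp(\|r_M(\theta)\|_2))^{-1}$.

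The main obstacle I expect is reconciling the continuous range $\tau_\theta(x) \in [-1, +1]$ with the strict $\pm 1$ setup of Proposition~\ref{app:implicitbiasgd}. The reweighting $\tilde{x}_n = \tau_\theta(x_n) x_n$ resolves this in spirit, but one must verify that the non-degeneracy hypothesis (Assumption~\ref{supportvectorspan}) continues to hold generically in $\theta$ for the reweighted dataset, and that the strict-margin gap used to discard non-support terms is uniform in $x_n \notin S$ (available from finiteness of $\mathcal{D}$). As Remark~\ref{asymptoticanalysisremark} signals, the statement should be read asymptotically, and the decay rate of $\|r_M(\theta)\|_2$ is inherited verbatim from Soudry et al.
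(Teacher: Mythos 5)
Your proof is correct and follows essentially the same route as the paper's: invoke Proposition~\ref{app:implicitbiasgd} (with $w_*=\theta$ as the separability certificate), substitute the explicit iterate expansion into the outer loss, restrict the sum to support vectors, factor each summand into the $M^{-1}$, dual-variable, and residual pieces, bound the residual piece by Cauchy--Schwarz, and collapse via the primal--dual identity $\|w_{\text{SVM}}(\theta)\|_2^2 = \sum_{n\in S_\theta}(\alpha_{\text{SVM}}(\theta))_n$. The only cosmetic differences are that the paper discards the non-support terms by pure non-negativity of $\exp(\cdot)$ rather than by noting their $o(M^{-1})$ decay, and that the caveat you raise about the fractional range of $\tau_\theta$ is precisely the informality the paper concedes in Remark~\ref{asymptoticanalysisremark}.
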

\begin{proof} 
The key observation is that the task encoder $\tau_\theta(x)$ generates linearly separable labelings, allowing us to apply Proposition \ref{app:implicitbiasgd} and substitute the explicit form of iterates $w_{M}$ into the outer objective (\ref{bin_general_outer_theta}). Indeed, for example, $w_{*}=\theta$ satisfies Assumption \ref{linearseparability}, \textit{i.e.}, $\tau_\theta(x_n) \theta^T x_n > 0$ for all $x_n \in \mathcal{D}$ and $\theta \neq 0$. Thus, substituting the iterates $w_{M}$ into the outer objective leads to:
\begin{equation}\label{binlosssubstitute}
    \mathcal{L}^{\text{binary}}_{M}(\theta) = \sum_{n=1}^{N} \exp(-\tau_\theta(x_n) \left(w_{\text{SVM}}(\theta) \log M + \tilde{w}(\theta) + r_M(\theta) \right)^T x_n ),
\end{equation}
where we explicitly indicate that $w_{\text{SVM}}(\theta)$, $\tilde{w}(\theta)$ and $r_{M}(\theta)$ depend on the parameters $\theta$ of the task encoder $\tau_\theta$. Let $\mathcal{L}_n(\theta)$ be $n$-th term of the sum and $S_{\theta}$ be the indices of support vectors, \textit{i.e.}, $n \in \{1, \dots, N\} \textrm{ s.t. } \tau_{\theta}(x_n) w_{\text{SVM}}^T x_n = 1$. Then, due to the non-negativity of $\exp(\cdot)$, we have: 
\begin{equation}\label{ineqexpnonnegativity}
    \mathcal{L}^{\text{binary}}_{M}(\theta) \geq \sum_{n \in S_\theta} \mathcal{L}_n (\theta).
\end{equation}
Considering single term $\mathcal{L}_{n}(\theta),\ n \in S_\theta$ and opening the brackets, we obtain:
\begin{equation}\label{loss_single_term}
    \mathcal{L}_{n}(\theta) = \underbrace{\exp(-\log M \cdot \tau_{\theta}(x_n) w_{\text{SVM}}^T(\theta) x_n)}_{\mathcal{L}_1} \underbrace{\exp(-\tau_{\theta}(x_n) \tilde{w}(\theta)^T x_n)}_{\mathcal{L}_2} \underbrace{\exp(-\tau_{\theta}(x_n) r_M(\theta)^T x_n)}_{\mathcal{L}_{3}}.
\end{equation}
Inspecting (\ref{loss_single_term}) separately for each term $\mathcal{L}_{i}$, we obtain: \textit{(i)} $\mathcal{L}_{1} = \frac{1}{M}$ since $n \in \mathcal{S}_{\theta}$; \textit{(ii)} $\mathcal{L}_{2} = \frac{(\alpha_{\text{SVM}}(\theta))_n}{\eta}$ by (\ref{wtilde}); and \textit{(iii)} $\mathcal{L}_{3} \geq \exp(-\|r_M(\theta)\|_2)$ by Cauchy–Schwarz inequality given that $\|\tau_\theta(x_n) x_n\|_2 \leq 1$. Combining this with (\ref{ineqexpnonnegativity}), finally we obtain:
\begin{equation}\label{finalineq}
    \mathcal{L}^{\text{binary}}_{M}(\theta) \geq (M \eta \exp(\|r_M(\theta)\|_2))^{-1} \sum_{n \in S_{\theta}} \alpha_n(\theta) = (M \eta \exp(\|r_M(\theta)\|_2))^{-1} \|w_{\text{SVM}}(\theta)\|_2^2,
\end{equation}
where the last equality comes from the fact that:
\begin{align*}
\begin{split}
\|w_{\text{SVM}}(\theta)\|_2^2 & = w_{\text{SVM}}(\theta)^T w_{\text{SVM}}(\theta) = w_{\text{SVM}}(\theta)^T \sum_{n \in S_{\theta}} (\alpha_{\text{SVM}}(\theta))_n \tau_{\theta}(x_n) x_n = \\ & = \sum_{n \in S_{\theta}} (\alpha_{\text{SVM}}(\theta))_n \tau_{\theta}(x_n) w_{\text{SVM}}(\theta)^T x_n =  \sum_{n \in S_{\theta}} (\alpha_{\text{SVM}}(\theta))_n \cdot 1 = \sum_{n \in S_{\theta}} (\alpha_{\text{SVM}}(\theta))_n,
\end{split}
\end{align*}
concluding the proof.
\end{proof}

\newpage
\section{Experimental Details}\label{appendix:expdetails}
\renewcommand{\thetable}{\Alph{section}\arabic{table}}
\renewcommand\thefigure{\Alph{section}\arabic{figure}} 
\renewcommand\thealgorithm{\Alph{section}\arabic{algorithm}}
\renewcommand{\theHtable}{\Alph{section}\arabic{table}}
\renewcommand\theHfigure{\Alph{section}\arabic{figure}} 
\renewcommand\theHalgorithm{\Alph{section}\arabic{algorithm}}
\setcounter{table}{0}
\setcounter{figure}{0}
\setcounter{algorithm}{0}

\subsection{Datasets}\label{app:datasets}
We evaluate our framework on 26 vision datasets studied in~\citet{radford2021learning}.
These datasets cover a wide range of vision tasks, including general object classification datasets CIFAR10~\cite{krizhevsky2009learning}, CIFAR100~\cite{krizhevsky2009learning}, STL10~\cite{coates2011analysis}, ImageNet~\cite{deng2009imagenet}, Caltech101~\cite{fei2004learning}; fine-grained object classification datasets Food101~\cite{bossard2014food}, Flowers~\cite{nilsback2008automated}, Birdsnap~\cite{berg2014birdsnap}, Stanford Cars~\cite{krause20133d}, FGVC Aircraft~\cite{maji2013fine}, Oxford Pets~\cite{parkhi2012cats}; handwritten digits classification dataset MNIST~\cite{lecun1998gradient}; texture classification dataset DTD~\cite{cimpoi2014describing}; scene classification dataset SUN397~\cite{xiao2016sun}; the facial emotion recognition dataset FER2013~\cite{goodfellow2013challenges}; the satellite image
classification datasets EuroSAT~\cite{helber2019eurosat}, RESISC45~\cite{cheng2017remote}; the German Traffic Sign Recognition Benchmark (GTSRB)~\cite{stallkamp2012man}; the KITTI Distance dataset~\cite{geiger2012we}; the metastatic tissue classification dataset PatchCamelyon (PCam)~\cite{veeling2018rotation}; action recognition datasets UCF101~\cite{soomro2012ucf101}, Kinetics700~\cite{carreira2019short}; the CLEVR counting dataset~\cite{johnson2017clevr}; the Hateful Memes dataset~\cite{kiela2020hateful}; the country classification dataset Country211~\cite{radford2021learning} and the Rendered SST2 dataset~\cite{radford2021learning}. For CLEVR, we take 2000 random samples as training split and 500 random samples as test split. For two video datasets UCF101 and Kinetics700, we take the middle frame of each video clip as the input of the pre-trained models. Details of each dataset are provided in Table~\ref{tab:dataset}. We use \textit{accuracy} as the evaluation metric for all the datasets. Finally, it's worth noting that \methodnew could also be applied to the tasks in various modalities besides vision or even in cross-modalities scenarios, provided that the pre-trained representations are available.
\\

\begin{table*}[ht]
\centering
\renewcommand{\arraystretch}{1.1}
\caption{\xhdr{Benchmark suite of 26 datasets} We use \textit{accuracy} as the evaluation metric for all datasets.}
\vspace{6pt}
\begin{tabular}{lccc}
\Xhline{2\arrayrulewidth}
 Dataset & Number of Classes & Train size & Test size\\
 \hline
 Food101~\cite{bossard2014food} & 101& 75,750& 25,250\\
 CIFAR10~\cite{krizhevsky2009learning} & 10 & 50,000& 10,000\\
 CIFAR100~\cite{krizhevsky2009learning} & 100 & 50,000& 10,000\\
 Birdsnap~\cite{berg2014birdsnap} & 500 & 37,221& 2,500\\
 SUN397~\cite{xiao2016sun} & 397& 19,850& 19,850\\
 StanfordCars~\cite{krause20133d} & 196& 8,144& 8,041\\
 FGVC Aircraft~\cite{maji2013fine} & 100& 6,667& 3,333\\
 DTD~\cite{cimpoi2014describing} & 47 & 3,760& 1,880\\
 OxfordPets~\cite{parkhi2012cats} & 37 & 3,680& 3,669\\
 Caltech101~\cite{fei2004learning} & 102 & 3,060& 6,084\\
 Flowers~\cite{nilsback2008automated} &102 &  2,040& 6,149\\
 MNIST~\cite{lecun1998gradient} & 10 & 60,000& 10,000\\
 FER2013~\cite{goodfellow2013challenges} &7 &  28,709& 3,589\\
 STL10~\cite{coates2011analysis} &10 &  5,000& 8,000\\
 EuroSAT~\cite{helber2019eurosat} &10 &  10,000& 5,000\\
 RESISC45~\cite{cheng2017remote} &45 &  25,200& 6,300\\
 GTSRB~\cite{stallkamp2012man}&43 &  26,640& 12,630\\
 KITTI Distance~\cite{geiger2012we}& 4 & 5,985& 1,496\\
 Country211~\cite{radford2021learning} & 211 & 42,200& 21,100\\ 
 PatchCamelyon~\cite{veeling2018rotation}&2 &  294,912& 32,768\\
 UCF101~\cite{soomro2012ucf101} &101 &  9,537& 3,783\\ 
 Kinetics700~\cite{carreira2019short} &700 &  536,485& 33,966\\
 CLEVR Counts~\cite{johnson2017clevr} &8 &  2,000& 500\\
 HatefulMemes~\cite{kiela2020hateful} & 2 & 8,500& 500\\
 The Rendered SST2~\cite{radford2021learning}& 2 & 7,792& 1,821\\
 ImageNet~\cite{deng2009imagenet} & 1000 & 1,281,167& 50,000\\
\Xhline{2\arrayrulewidth}
\end{tabular}
\label{tab:dataset}
\end{table*}

\newpage
\subsection{Representations}\label{app:representations}
\methodnew is compatible with any pre-trained representations. 
This paper presents the comprehensive evaluation of \methodnew on a wide range of representation spaces that vary on the pre-training datasets, model architectures and training objectives. Specifically, we consider CLIP ResNets (RN50, RN101, RN50x4, RN50x16, RN50x64) and CLIP Vision Transformers (ViT-B/32, ViT-B/16, ViT-L/14) pre-trained on WebImageText-400M~\cite{radford2021learning} for training \methodnew 1-space.
These models are pre-trained on the same dataset, scaling with number of the parameters.
For \methodnew 2-spaces, we incorporate DINOv2 ViT-g/14 pre-trained on LVD-142M~\cite{oquab2023dinov2} as the second representation space. 
Moreover, we also include SWAG ViT-H/14 pre-trained on IG-3.6B~\cite{singh2022revisiting}, CoCa ViT-L/14~\cite{yu2022coca} pre-trained on LAION-2B~\cite{schuhmann2022laionb}~\footnote{Since the original paper does not release the models, we use the reproduced version from the OpenCLIP project, which could be found at~\href{https://github.com/mlfoundations/open_clip}{https://github.com/mlfoundations/open\_clip}.}, OpenCLIP ViT-L/14 pre-trained on LAION-2B~\cite{cherti2023reproducible}, MOCOv3 ViT-B/16 pre-trained on ImageNet-1000~\cite{chen2021empirical} and EsViT Swim-B pre-trained on ImageNet-1000~\cite{li2022efficient} to study whether incorporating stronger representations on the given dataset may lead to the increased performance of unsupervised transfer with \methodnew. For all the models, we precompute the representations with standard image preprocessing pipelines and do not use any data augmentations during training of \methodnew. The details of pre-trained representations are provided on Table~\ref{tab:representations}.
\\
\\

\begin{table*}[ht]
\centering
\renewcommand{\arraystretch}{1.15}
\caption{\xhdr{Representation spaces used in \methodnew} ``Weakly Supervised" indicates that the model is pre-trained with text supervision, such as caption or hashtag of the image.}
\vspace{8pt}
\begin{tabular}{ccccc}
\Xhline{2\arrayrulewidth}
    Model & Architecture & Parameters & Trained on & Weakly Supervised \\
\hline
    \multirowcell{8}{CLIP\\~\cite{radford2021learning}} 
     & RN50 & 100M & \multirowcell{8}{WebImageText-400M} & \multirowcell{8}{\cmark} \\
     & RN101 & 120M &  \\
     & RN50x4 & 180M &  \\
     & RN50x16 & 290M &  \\
     & RN50x64 & 620M &  \\
     & ViT-B/32 & 150M & \\
     & ViT-B/16 & 150M &  \\
     & ViT-L/14 & 430M &  \\ \cmidrule{2-5}
    OpenCLIP~\cite{cherti2023reproducible} & ViT-L/14 & 430M &LAION-2B &\cmark \\
    SWAG~\cite{singh2022revisiting} & ViT-H/14 & 630M & IG-3.6B & \cmark \\
    CoCa~\cite{yu2022coca} & ViT-L/14 & 640M & LAION-2B & \cmark \\
    MOCOv3~\cite{chen2021empirical} & ViT-B/16 & 86M &ImageNet-1000 & \xmark \\
    EsViT~\cite{li2022efficient} & Swin-B & 88M & ImageNet-1000 & \xmark \\
    DINOv2~\cite{oquab2023dinov2} & ViT-g/14 & 1.1B & LVD-142M & \xmark \\
 \Xhline{2\arrayrulewidth}   
\end{tabular}
\label{tab:representations}
\end{table*}

\newpage
\subsection{Implementation Details}\label{app:implementationdetails}
\xhdr{Efficient alternating optimization} \methodnew contains a bilevel objective that measures the loss of the task encoder using the training loss of a linear classifier trained on the task produced by the task encoder. The hyper-gradient of the task encoder is $\nabla_\theta \mathcal{L} = \frac{\partial \mathcal{L}}{\partial \theta} + (\frac{\partial w}{\partial \theta})^T \frac{\partial \mathcal{L}}{\partial w}|_{w=w^*}$, where the Jacobian $\frac{\partial w}{\partial \theta}$ is generally expensive to obtain. Existing works usually estimate the hyper-gradient via unrolling or approximation based on the implicit function theorem, e.g., see~\citet{finn2017model, lorraine2020optimizing, ji2021bilevel, kwon2023fully, dagreou2022framework, bolte2023one, liu2022bome}. 
However, these methods might be inefficient and suboptimal in practice~\cite{scieur2022curse}. 
Fortunately, in the \methodnew framework, one could avoid the estimation of $(\frac{\partial w}{\partial \theta})^T \frac{\partial \mathcal{L}}{\partial w}$ given the fact that $\frac{\partial \mathcal{L}}{\partial w}|_{w=w^*}\approx0$. Thus, the gradient of the task encoder is simplified to $\nabla_\theta \mathcal{L}=\frac{\partial \mathcal{L}}{\partial \theta}|_{w=w^*}$. This inspires us to train the task encoder via \textit{alternating optimization}, which has been shown efficient for the min-min optimization problems~\cite{ablin2020super}. At each iteration, we first fix the task encoder and train the linear classifier for $M$ steps to find its approximate optima. Note that one could choose to re-initialize the linear classifier every time (\textit{cold-start}), or just start from the values of last iteration (\textit{warm-start}), which might introduce different implicit bias as noted by \citet{vicol2022implicit}.
After that, we update the task encoder based on the loss of the linear classifier. The training is efficient since no second-order gradient is needed in this process.
The pseudo-code of \methodnew is provided in Algorithm~\ref{alg:turtle}.

\begin{algorithm}
\caption{\methodnew for Unsupervised Transfer}
    \begin{algorithmic}[1]
        \STATE \textbf{Input}: Dataset $\mathcal{D}$, number of classes $C$, number of iterations $T$, representation spaces $\phi_1(\cdot), ..., \phi_K(\cdot)$, task parameters $\theta=\{\theta_1,...,\theta_K\}$, linear classifiers $w^1,...,w^K$, learning rate $\eta$, optimization operator $\Xi(\cdot)$, number of adaptation steps $M$, entropy regularization weight $\gamma$
        {\color{OliveGreen} \hfill // $\Xi(\cdot)$ can be any iterative operator, e.g., gradient descent}
        \STATE Randomly initialize $\theta_1, ..., \theta_K$ and $w^1_0, ...,w^K_0$
        \FOR{$t=1$ to $T$}
            \STATE Sample mini-batch from dataset $X\sim \mathcal{D}$
            \STATE Generate task from task encoder $\tau_\theta \leftarrow \frac{1}{K}\sum_{k=1}^K{\sigma(\theta_k\phi_k(X))}$
            \STATE Update linear classifiers for $M$ steps $\forall k \in [K]: w^k_M\leftarrow \Xi^{(M)}(w^k_0, X)$ \vskip 3pt 
            \STATE Update task parameters $\forall k \in [K]: \theta_k \leftarrow \theta_k - \eta \frac{\partial }{\partial \theta_k} \left[\mathcal{L}_{ce}({w^k_M}\phi_k(X), \tau_\theta) + \gamma \mathcal{R}(\overline{\tau}_\theta)\right]$ {\color{OliveGreen} \hfill // partial derivative $\frac{\partial}{\partial \theta_k}$} \vskip 2pt
            \STATE \algorithmicif\ warm-start \algorithmicthen \ update start points $\forall k\in [K], w^k_0 \leftarrow w^k_M$
            {\color{OliveGreen} \hfill // cold-start keeps the initial $w^k_0$}

        \ENDFOR

        \STATE \textbf{Output}: Task parameters $\theta=\{\theta_1,...,\theta_K\}$
    \end{algorithmic}
\label{alg:turtle}
\end{algorithm}

\xhdr{Training details}
We precompute the feature representations for all datasets before the training.
We use Weight Normalization~\cite{salimans2016weight} to parameterize the task encoder since we found it helpful for the convergence. ADAM~\cite{kingma2014adam} optimizer is used for the training of both linear classifier and task encoder. We use $10000$ as the default batch-size. For datasets smaller than $10000$, we train the model with full-batch at each iteration. Overall, we found \methodnew is robust to the choice of the batch-size. We update the linear classifier for $M=10$ steps at each iteration and train the task encoder for $T=6000$ iterations in total. If not specifically mentioned, we set the entropy regularization parameter $\gamma=10$ for all experiments. We show robustness of \methodnew to this hyperparameter in Appendix \ref{app:entropyablation}. For each dataset, we do a grid search over $5$ different learning rates for both task encoder and linear classifier with $\eta \in \{0.01, 0.005, 0.001, 0.0005, 0.0001\}$, respectively. We combine each pair of learning rates with the choice of warm-start or cold-start, and finally get set of $50$ triplets to search over for each dataset. We use \textit{cross-validation} to select hyper-parameters, as described below. Following~\citet{gadetsky2023pursuit, van2020scan}, we use Hungarian algorithm~\cite{kuhn1955hungarian} to match the labeling found by \methodnew and the ground truth labeling to compute the clustering accuracy. If not specified, we train our model on the training split and report the clustering accuracy on the test split. In Section~\ref{app:testtestexp}, we also consider the setting of both training and evaluating \methodnew on the test split, mimicking low data regime. 

\xhdr{Cross-validation for task selection} For each dataset, we obtain $50$ tasks after grid search, \textit{i.e.}, each corresponds to the hyperparameter triplet. We use $10$-fold cross-validation to select the best task. The cross-validation regards the learned task as ``pseudo-labels'' and measures the \textit{generalization error} of a linear classifier trained on these ``pseudo-labels''. Specifically, we randomly split the dataset into $10$ folds. In each round, a linear classifier is trained on $9$ folds and tested on the rest fold. 
The final cross-validation score is the average test accuracy over all rounds. Importantly, this process relies solely on the learned tasks and does \textit{not} need any information about the ground-truth labels. For \methodnew trained on multiple representations, we do cross-validation on each representation space separately and average the final scores. 
The task with the highest cross-validation score is selected as the final output of \methodnew. 
Figure~\ref{fig:cross_val} shows the performance of the learned tasks obtained by \methodnew 2-spaces CLIP ViT-L/14 and DINOv2 and their corresponding cross-validation scores over 26 datasets. As indicated by the plot, the cross-validation score is well correlated with the clustering accuracy with an average of $\rho=0.61$ two-sided Pearson correlation coefficient over $26$ datasets. 
Moreover, among $20$ datasets, cross-validation successfully identifies the best or near-best task (\textit{i.e.}, with less than $1.5$ point difference of clustering accuracy). The result of cross-validation also empirically verifies the effectiveness of the generalization-based objective and suggests that the labelings with low generalization error tend to be more aligned with human labeled tasks, confirming the original findings of \citet{gadetsky2023pursuit} on the wide suite of datasets.

\begin{figure}[t!]
    \centering
    \includegraphics[width=0.95\linewidth]{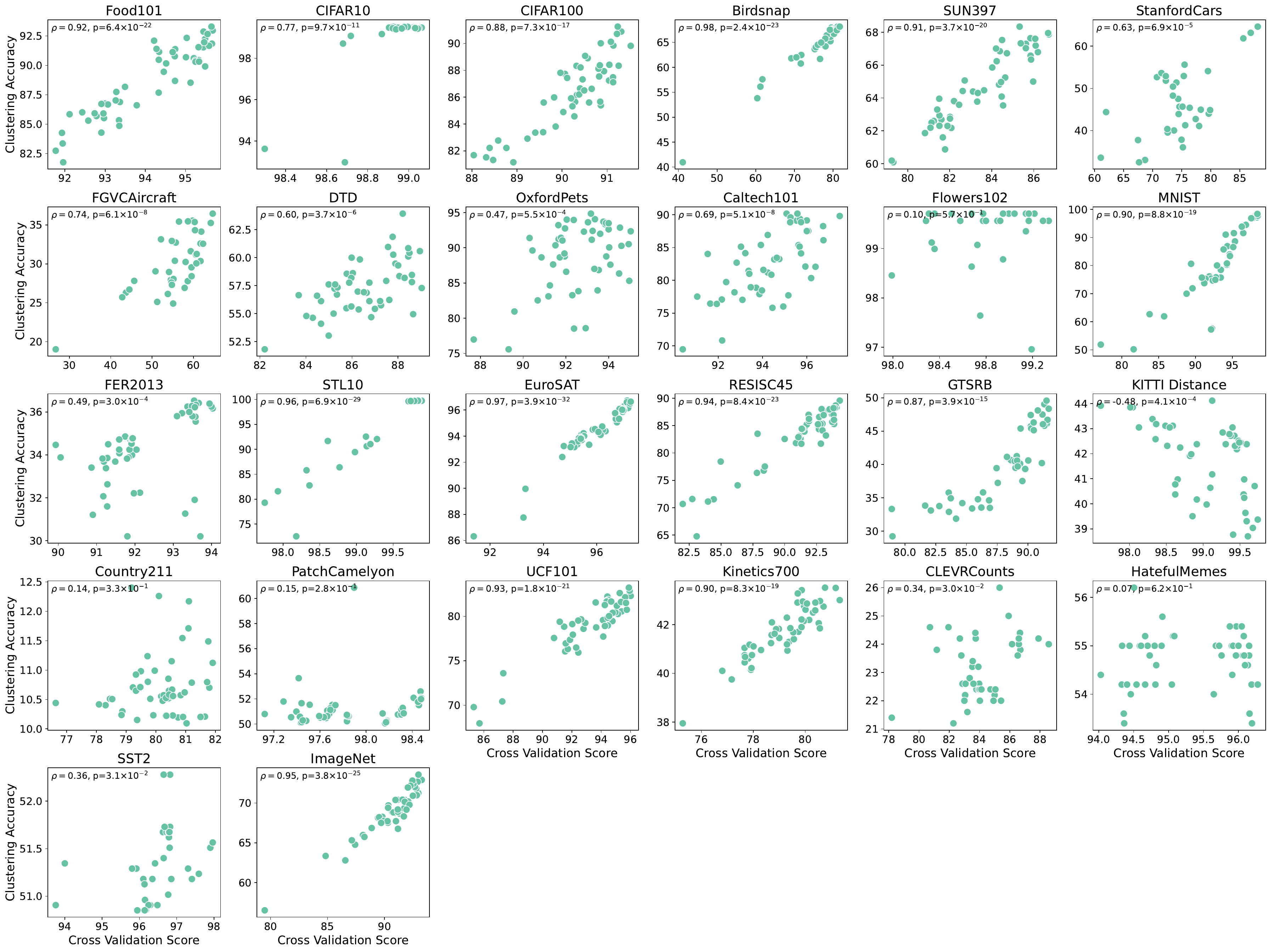}
    \caption{\textbf{Task selection via cross-validation}. We use \methodnew 2-spaces CLIP ViT-L/14 and DINOv2 to produce the tasks. We show the cross-validation score and corresponding clustering accuracy of the tasks learned by \methodnew with $50$ different hyperparameters for each dataset. The cross-validation score is well correlated with the clustering accuracy ($\rho=0.61$ of two-sided Pearson correlation coefficient averaged over 26 datasets).}
    \label{fig:cross_val}
    \vspace{-1em}
\end{figure}

\xhdr{Linear probe} Supervised linear probe is a widely used method to evaluate the quality of  representation learning~\cite{radford2021learning, oquab2023dinov2}. It trains a linear classifier on the train split on top of the representations extracted from the pre-trained models and then evaluates the performance on the test split. We use the \texttt{cuML.LogisticRegression}~\cite{raschka2020machine} for linear probe evaluation in our paper~\footnote{This library is available at ~\href{https://github.com/rapidsai/cuml}{https://github.com/rapidsai/cuml}.}. The linear classifier is trained with L-BFGS optimizer for maximum of $1000$ iterations. The \texttt{cuML} library allows for GPU acceleration and, thus, it is much faster than \texttt{sklearn.linear\_model.LogisticRegression} counterpart, especially on large datasets such as ImageNet. To determine the strength of L2 norm regularization, we randomly take 20\% of the training split for validation and search over $96$ values in the log-space ranging from $10^{-6}$ to $10^{6}$. The selection process takes a few minutes on small datasets, and around $8$ hours on ImageNet, with a single NVIDIA A100 GPU. After that, we train the model with the best regularization strength on the entire training split and report the classification accuracy on the test split.

\section{Details on Unsupervised Baselines and Numerical Results}\label{appendix:unsuperviseddetails}
\renewcommand{\thetable}{\Alph{section}\arabic{table}}
\renewcommand\thefigure{\Alph{section}\arabic{figure}} 
\renewcommand\thealgorithm{\Alph{section}\arabic{algorithm}}
\renewcommand{\theHtable}{\Alph{section}\arabic{table}}
\renewcommand\theHfigure{\Alph{section}\arabic{figure}} 
\renewcommand\theHalgorithm{\Alph{section}\arabic{algorithm}}
\setcounter{table}{0}
\setcounter{figure}{0}
\setcounter{algorithm}{0}

\xhdr{K-Means clustering} We apply K-Means \cite{macqueen1967some} clustering on top of pre-trained features as a simple baseline that does not require task-specific representation learning. Similarly to the linear probe, we also use the implementation from \texttt{CuML} library for the GPU acceleration (i.e., \texttt{CuML.KMeans}). For each dataset and the corresponding representation, we train K-Means with maximum $1000$ iterations (\texttt{max\textunderscore iter=1000}) and $10$ random initializations (\texttt{n\textunderscore init=10}) on the train split, and report the clustering accuracy on the test split. In the case when multiple representations are used, we first L2 normalize representation from each pre-trained model, and then apply K-Means clustering on top of the concatenation of all L2 normalized features.

\xhdr{\methodold} \methodold \cite{gadetsky2023pursuit} is the recent state-of-the-art unsupervised learning baseline that introduced the instantiation of the generalization-based objective (\ref{hume_instantiation}). Specifically, it learns task-specific representations on the target dataset to model the task encoder, and then measures the generalization error of a linear classifier in the representation space of a foundation model. We use the original source code~\footnote{Code could be found at ~\href{https://github.com/mlbio-epfl/hume}{https://github.com/mlbio-epfl/hume}.} for the implementation of \methodold, with modifications to improve the speed and performance.
In particular, we replace task-specific representations with a pre-trained foundation model since we empirically found it yields better performance.  Besides, we remove the variance reduction used in the original \methodold and sample only the single mini-batch at every iteration (\textit{i.e.}, the same as \methodnew), since we found it significantly reduces the computational cost and does not influence the final performance.
We update the linear classifier with $M=300$ steps at each iteration, and train the task encoder with $T=6000$ iterations in total. The default batch-size is set to $10000$.
Moreover, we follow the same hyperparameter selection procedure of \methodnew to select the inner/outer learning rates and warm-start/cold-start for \methodold.

\xhdr{Comparison of \methodnew to \methodold and K-Means} For a fair comparison, we train \methodnew, \methodold and K-Means using the same representation spaces, i.e., CLIP ViT-L/14 and DINOv2 ViT-g/14. Given that \methodold's task encoder parametrization uses only the single space, we run \methodold with the task encoder modeled using CLIP or DINOv2 (denoted as \methodold \textit{CLIP} and \methodold \textit{DINOv2} respectively), and measure the generalization error using the rest representation space.
For each method, we report the training time in minutes and the clustering accuracy averaged over $3$ random seeds. For each random seed, we perform the hyperparameter selection for \methodold and \methodnew as described in the corresponding subsection above. Table~\ref{tab:unsupervised_acc} and Table~\ref{tab:unsupervised_time} show the obtained results on $5$ datasets. Overall, the results indicate that \methodnew outperforms \methodold and K-Means on all the considered datasets, highlighting the effectiveness of design choices made in \methodnew. For instance, combining multiple representation spaces for modeling the task encoder in \methodnew brings substantial gains compared to \methodold. Namely, \methodnew achieves remarkable 28\% and 23\% absolute improvement (40\% and 30\% relative improvement) over \methodold \textit{DINOv2} and \methodold \textit{CLIP} respectively on the MNIST dataset. Furthermore, efficient first-order optimization techniques used in \methodnew allow for fast optimization, taking just $5$ minutes even on large-scale datasets such as ImageNet.

\begin{table*}[ht]
\centering
\vspace*{-2mm}
\renewcommand{\arraystretch}{1.0}
\caption{\xhdr{Accuracy of \methodnew and unsupervised baselines} The results are averaged with standard deviations computed over $3$ runs.}
\vspace{6pt}
\begin{tabular}{l|ccccc}
\midrule[1pt]
{\bf Method} 
& \bf MNIST & \bf CIFAR100 & \bf Food101 & \bf Birdsnap & \bf ImageNet \\
\midrule
K-Means & 68.9 {\footnotesize $\pm$ 0.6}  
        & 75.1 {\footnotesize $\pm$ 0.5}  
        & 78.0 {\footnotesize $\pm$ 0.7} 
        & 54.0 {\footnotesize $\pm$ 0.8}  
        & 64.8 {\footnotesize $\pm$ 0.3} \\
\methodold \textit{CLIP} 
        & 75.3 {\footnotesize $\pm$ 5.0} 
        & 71.2 {\footnotesize $\pm$ 2.2} 
        & 86.5 {\footnotesize $\pm$ 1.3}  
        & 45.8 {\footnotesize $\pm$ 1.8}  
        & 65.2 {\footnotesize $\pm$ 0.9} \\
\methodold \textit{DINOv2} 
        & 69.7 {\footnotesize $\pm$ 5.9}  
        & 83.9 {\footnotesize $\pm$ 1.2}  
        & 85.3 {\footnotesize $\pm$ 1.7} 
        & 57.3 {\footnotesize $\pm$ 0.7} 
        & 68.1 {\footnotesize $\pm$ 0.2} \\
\methodnew  & \bf98.0 {\footnotesize $\pm$ 0.4}  
        & \bf89.1 {\footnotesize $\pm$ 1.0}  
        & \bf92.8 {\footnotesize $\pm$ 0.5}  
        & \bf67.8 {\footnotesize $\pm$ 0.4}  
        & \bf72.4 {\footnotesize $\pm$ 0.3}  \\
\midrule[1pt]
\end{tabular}
\label{tab:unsupervised_acc}
\end{table*}

\begin{table*}[ht]
\centering
\vspace*{-5mm}
\renewcommand{\arraystretch}{1.0}
\caption{\xhdr{Training time (in minutes) of \methodnew and unsupervised baselines} The results are averaged with standard deviations computed over $3$ runs. The standard deviation for K-Means and \methodnew is negligible.}
\vspace{6pt}
\begin{tabular}{l|ccccc}
\midrule[1pt]
{\bf Method} 
& \bf MNIST & \bf CIFAR100 & \bf Food101 & \bf Birdsnap & \bf ImageNet \\
\midrule
K-Means & 0.02 {\footnotesize $\pm$ 0.0}
        & 0.04 {\footnotesize $\pm$ 0.0}
        & 0.1 {\footnotesize $\pm$ 0.0}
        & 0.1 {\footnotesize $\pm$ 0.0}
        & 6.4 {\footnotesize $\pm$ 0.0} \\
\methodold \textit{CLIP} 
        & 15.2 {\footnotesize $\pm$ 2.4}
        & 44.1 {\footnotesize $\pm$ 0.5}
        & 45.5 {\footnotesize $\pm$ 1.3}
        & 156.8 {\footnotesize $\pm$ 1.2}
        & 285.4 {\footnotesize $\pm$ 4.1} \\
\methodold \textit{DINOv2} 
        & 11.1 {\footnotesize $\pm$ 0.3}
        & 31.7 {\footnotesize $\pm$ 0.4}
        & 31.0 {\footnotesize $\pm$ 0.3}
        & 96.7 {\footnotesize $\pm$ 0.6}
        & 185.6 {\footnotesize $\pm$ 7.9} \\
\methodnew  & 1.6 {\footnotesize $\pm$ 0.0}
        & 1.6 {\footnotesize $\pm$ 0.0}
        & 1.7 {\footnotesize $\pm$ 0.0}
        & 2.1 {\footnotesize $\pm$ 0.0}
        & 4.8 {\footnotesize $\pm$ 0.0}
\\

\midrule[1pt]
\end{tabular}
\label{tab:unsupervised_time}
\end{table*}

\newpage
\clearpage
\section{Complete List of Individual Numerical Results}\label{appendix:allnumericresults}
\renewcommand{\thetable}{\Alph{section}\arabic{table}}
\renewcommand\thefigure{\Alph{section}\arabic{figure}} 
\renewcommand\thealgorithm{\Alph{section}\arabic{algorithm}}
\renewcommand{\theHtable}{\Alph{section}\arabic{table}}
\renewcommand\theHfigure{\Alph{section}\arabic{figure}} 
\renewcommand\theHalgorithm{\Alph{section}\arabic{algorithm}}
\setcounter{table}{0}
\setcounter{figure}{0}
\setcounter{algorithm}{0}

\vspace{5pt}
\begin{table*}[ht!]
\renewcommand{\arraystretch}{1.17}
\vspace{-2em}
\linespread{1}
\aboverulesep = 0.2em \belowrulesep = 0.2em
\scriptsize
\centering
\caption{\xhdr{Complete list of numerical results} Results of supervised linear probe, CLIP zero-shot transfer, K-Means clustering and \methodnew unsupervised transfer on $26$ datasets and $9$ representation spaces (CLIP RN50, RN101, RN50x4, RN50x16, RN50x64, ViT-B/32, ViT-B/16, ViT-L/14 and DINOv2 ViT-g/14). The results for K-Means 2-spaces and \methodnew 2-spaces are obtained using DINOv2 and the corresponding CLIP model.}
\vspace{3pt}

    \label{tab:big-big-scaling-table}
\vspace{-8em}
\end{table*}

\newpage
\section{Additional Results on 26 Vision Datasets}
\label{appendix:additional_results}
\renewcommand{\thetable}{\Alph{section}\arabic{table}}
\renewcommand\thefigure{\Alph{section}\arabic{figure}} 
\renewcommand\thealgorithm{\Alph{section}\arabic{algorithm}}
\renewcommand{\theHtable}{\Alph{section}\arabic{table}}
\renewcommand\theHfigure{\Alph{section}\arabic{figure}} 
\renewcommand\theHalgorithm{\Alph{section}\arabic{algorithm}}
\setcounter{table}{0}
\setcounter{figure}{0}
\setcounter{algorithm}{0}
We show all experimental results of supervised transfer with linear probe, CLIP zero-shot transfer, and \methodnew unsupervised transfer on 26 vision datasets in Table~\ref{tab:big-big-scaling-table}. The linear probe performance of DINOv2 ViT-g/14 is also included for reference. As indicated in the table, \methodnew achieves strong unsupervised transfer performance across various datasets and models. 
For example, as illustrated in Figure~\ref{fig:turtle_clip_zs}, \methodnew 1-space CLIP ViT-L/14 surpasses the corresponding CLIP zero-shot transfer on 13 out of 26 datasets. When trained with multiple representations (\textit{i.e.}, using CLIP models and DINOv2 ViT-g/14), \methodnew 2-spaces achieves superior performance on most datasets compared to \methodnew 1-space. Remarkably, on datasets such as MNIST and CIFAR100, the absolute improvement is 31\% and 21\%, indicating the effectiveness of \methodnew in combining the knowledge of multiple foundation models. Furthermore, as shown in Figure~\ref{fig:turtle_clipdino_zs}, \methodnew trained with CLIP ViT-L/14 and DINOv2 ViT-g/14 outperforms CLIP zero-shot on 15 out of 26 datasets.

In addition, we compare the performance of \methodnew to supervised linear probe using single representation space in Figure~\ref{turtle_linear_datasets}. It can be seen that there exists a strong positive correlation between the performance of unsupervised transfer and supervised linear probe. Figure~\ref{turtle_linear_datasets_multiple} provides the analysis of \methodnew 2-spaces trained using CLIP ViT-L/14 and DINOv2 ViT-g/14, indicating that the performance of \methodnew 2-spaces is also strongly correlated with the average linear probe performance. Overall, these results suggest that \methodnew could potentially benefit from the improved quality of representations, as measured by supervised linear probe.

Finally, it's worth noting that \methodnew 2-spaces might underperform \methodnew 1-space on some datasets, as shown in Table~\ref{tab:big-big-scaling-table}. We hypothesize that the discrepancy might stem from the suboptimality of DINOv2 representations for the tasks heavily related to semantic comprehension, such as semantic analysis (Rendered SST, HatefulMemes), traffic sign recognition (GTSRB), geolocation (Country211) and object counting (CLEVR). Since DINOv2 is pre-trained with self-supervised objective~\cite{oquab2023dinov2}, the learned features might not be directly transferable to these semantic-intensive tasks. Such trend could also be observed by the linear probe performance, where CLIP ViT-L/14 outperforms DINOv2 ViT-g/14 by a large margin on the Rendered SST2, CLEVR, Country211, GTSRB and FER2013. Therefore, incorporating DINOv2 representations might not yield optimal results for these specific datasets.

\begin{figure}[ht]
\begin{minipage}[t]{0.45\textwidth}
\centering
\includegraphics[width = 0.82\linewidth]{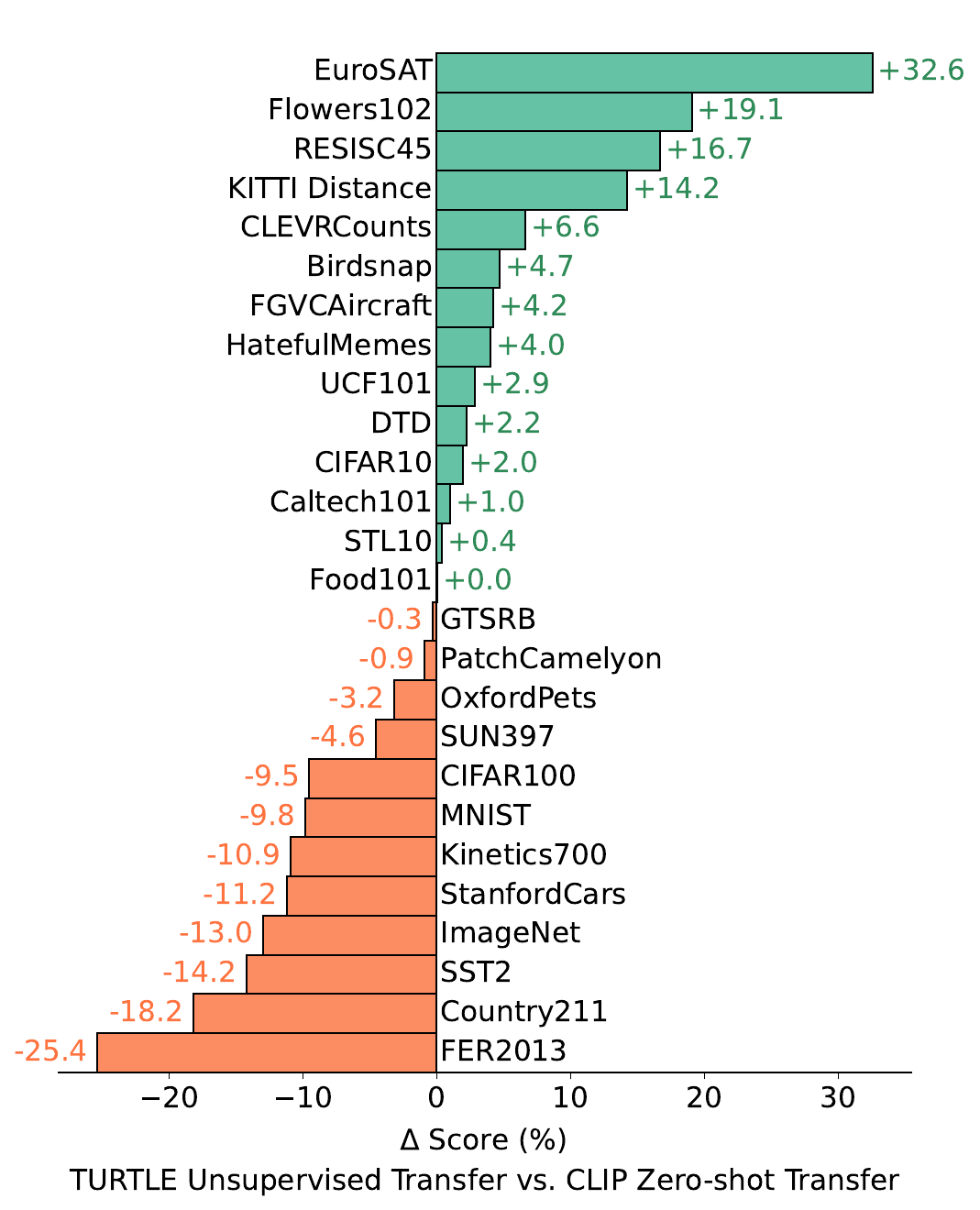}
\caption{\textbf{Using the same representation space, \methodnew outperforms CLIP zero-shot classifier on 13 out of 26 datasets.
} \methodnew is trained with CLIP ViT-L/14 and does not require any supervision. CLIP zero-shot utilizes the same architecture, but requires the additional text encoder and description of visual categories.} \label{fig:turtle_clip_zs}
\end{minipage}
\begin{minipage}[t]{0.05\textwidth}
~~~
\end{minipage}
\begin{minipage}[t]{0.48\textwidth}
\centering
\includegraphics[width = 0.98\linewidth]{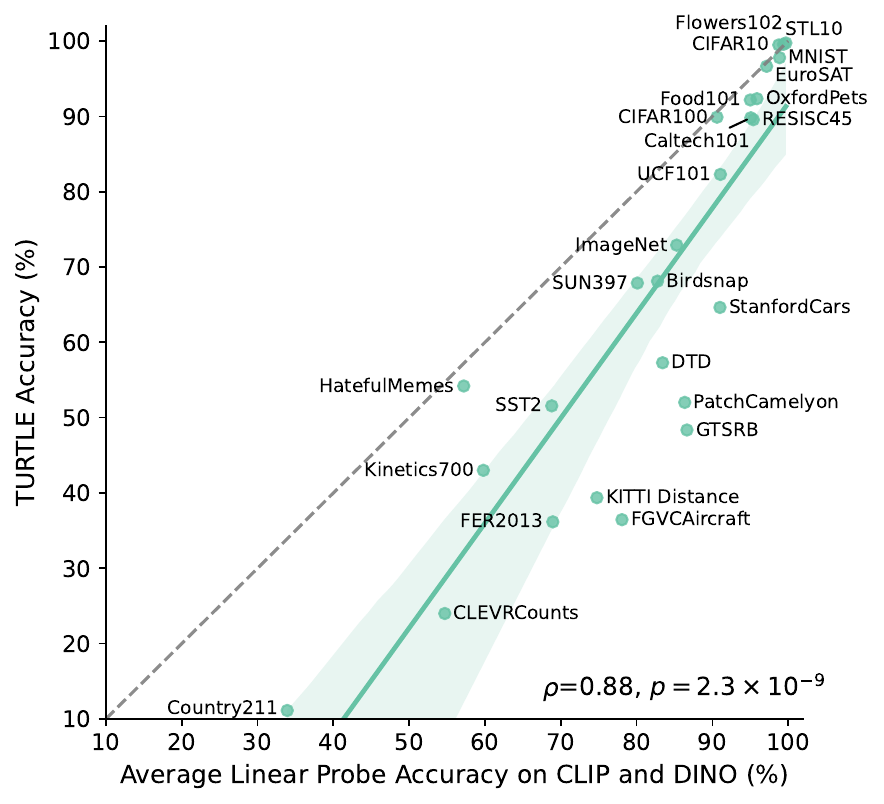}
\caption{\xhdr{Unsupervised transfer learning performance is correlated with supervised linear probe performance} The performance of \methodnew 2-spaces is strongly correlated with the average performance of linear probe using CLIP ViT-L/14 and DINOv2 ViT-g/14
($\rho=0.88, p=2.3\times10^{-9}$ for two-sided Pearson correlation coefficient).
}
\label{turtle_linear_datasets_multiple}
\end{minipage}
\end{figure}

\section{Results with Scaling Parameters.}\label{appendix:scaling_parameters}
\renewcommand{\thetable}{\Alph{section}\arabic{table}}
\renewcommand\thefigure{\Alph{section}\arabic{figure}} 
\renewcommand\thealgorithm{\Alph{section}\arabic{algorithm}}
\renewcommand{\theHtable}{\Alph{section}\arabic{table}}
\renewcommand\theHfigure{\Alph{section}\arabic{figure}} 
\renewcommand\theHalgorithm{\Alph{section}\arabic{algorithm}}
\setcounter{table}{0}
\setcounter{figure}{0}
\setcounter{algorithm}{0}

We provide the complete results for \methodnew on each dataset for multiple architectures and sizes scaling in the number of parameters in Table~\ref{tab:big-big-scaling-table}.
Specifically, we consider $8$ CLIP models trained in~\citet{radford2021learning} with scaling model sizes of ResNets (RN50, RN101, RN50x4, RN50x16, RNx64) and Vision Transformers (ViT-B/32, ViT-B/16, ViT-L/14). For each CLIP representation, we train \methodnew 1-space with the aforemention representation spaces, and \methodnew 2-spaces with DINOv2 ViT-g/14 used as the second representation space.

Figure~\ref{fig:scaling_all_resnets} and Figure~\ref{fig:scaling_all_vits} show the average performance of \methodnew trained with different CLIP ResNets and CLIP Vision Transformers, and compare it with the performance of the CLIP zero-shot transfer.
As indicated by the plots, the performance of \methodnew 1-space and \methodnew 2-spaces smoothly improves as the model compute increases.
Moreover, although being fully unsupervised, \methodnew 1-space achieves competitive performance (\textit{i.e.}, with less than $1$ point difference) compared to the zero-shot transfer for RN50, RN101, RN50x4, ViT-B/32, ViT-B/16 and ViT-L/14. 
Using RN50x16 and RN50x64, the performance \methodnew 1-space becomes a little bit worse than zero-shot transfer. However, with the additional DINOv2 representations, \methodnew 2-spaces consistently outperforms zero-shot transfer for all the models by a large margin. For example, \methodnew 2-spaces CLIP ViT-L/14 is on average $4\%$ better than zero-shot transfer. 
For completeness, Figure~\ref{fig:clipRN_26datasets} and Figure~\ref{fig:clipvit_26datasets} also provide the performance of \methodnew and CLIP zero-shot transfer on each individual dataset.

Overall, the performance of \methodnew follows a similar scaling trend as the zero-shot transfer~\cite{radford2021learning}. Furthermore, \methodnew can effectively combine the knowledge of multiple foundation models to further improve the performance of unsupervised transfer.

\begin{figure}[ht]
\vspace{15pt}
\begin{minipage}[t]{0.45\textwidth}
    \centering
    \includegraphics[width = 0.95\linewidth]{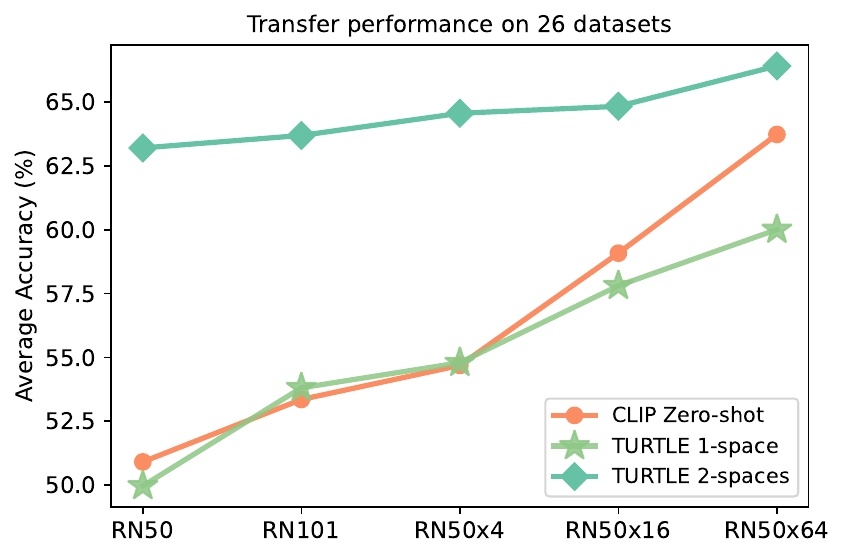}
    \caption{\xhdr{Performance of \methodnew and CLIP zero-shot averaged on 26 vision datasets using ResNets} \methodnew 2-spaces uses DINOv2 ViT-g/14 as the second representation space.}
    \label{fig:scaling_all_resnets}
\end{minipage}
\begin{minipage}[t]{0.05\textwidth}
    ~~~
\end{minipage}
\begin{minipage}[t]{0.44\textwidth}
    \centering
    \includegraphics[width = 0.95\linewidth]{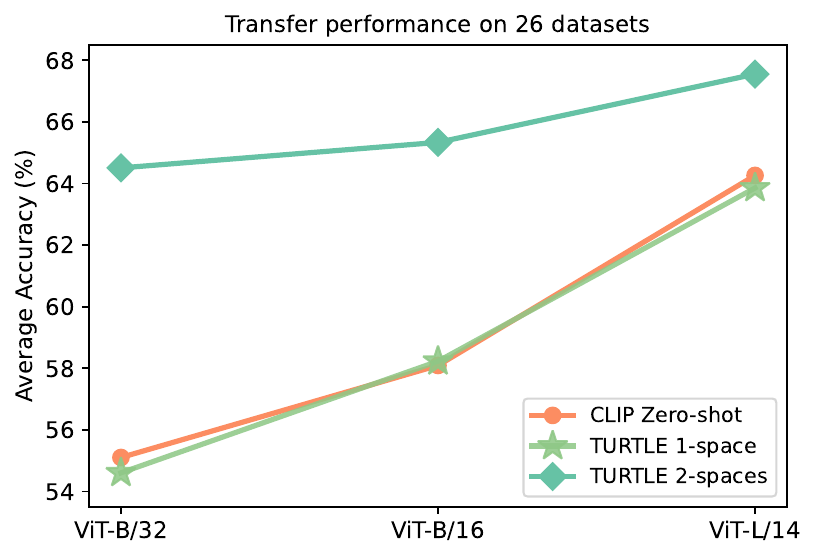}
    \caption{\xhdr{Performance of \methodnew and CLIP zero-shot averaged on 26 vision datasets using Vision Transformers} \methodnew 2-spaces uses DINOv2 ViT-g/14 as the second representation space.}
    \label{fig:scaling_all_vits}
\end{minipage}
\end{figure}

\clearpage 

\begin{figure}[ht]
    \centering
    \includegraphics[width=0.89\linewidth]{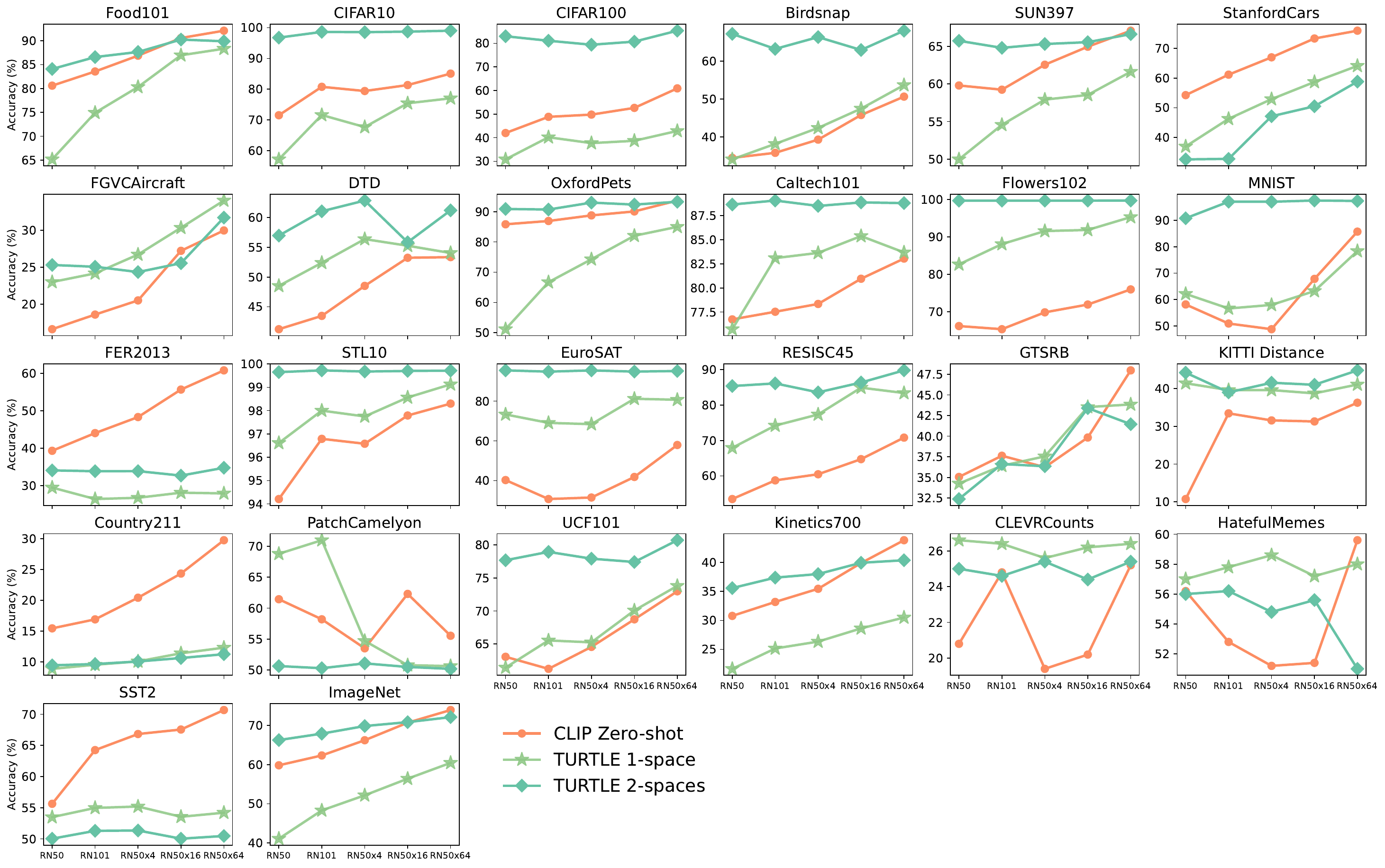}
    \caption{\xhdr{Per dataset performance of \methodnew and CLIP zero-shot using ResNets} \methodnew 2-spaces uses DINOv2 ViT-g/14 as the second representation space.}
    \label{fig:clipRN_26datasets}
\end{figure}

\begin{figure}[ht!]
    \centering
    \includegraphics[width=0.89\linewidth]{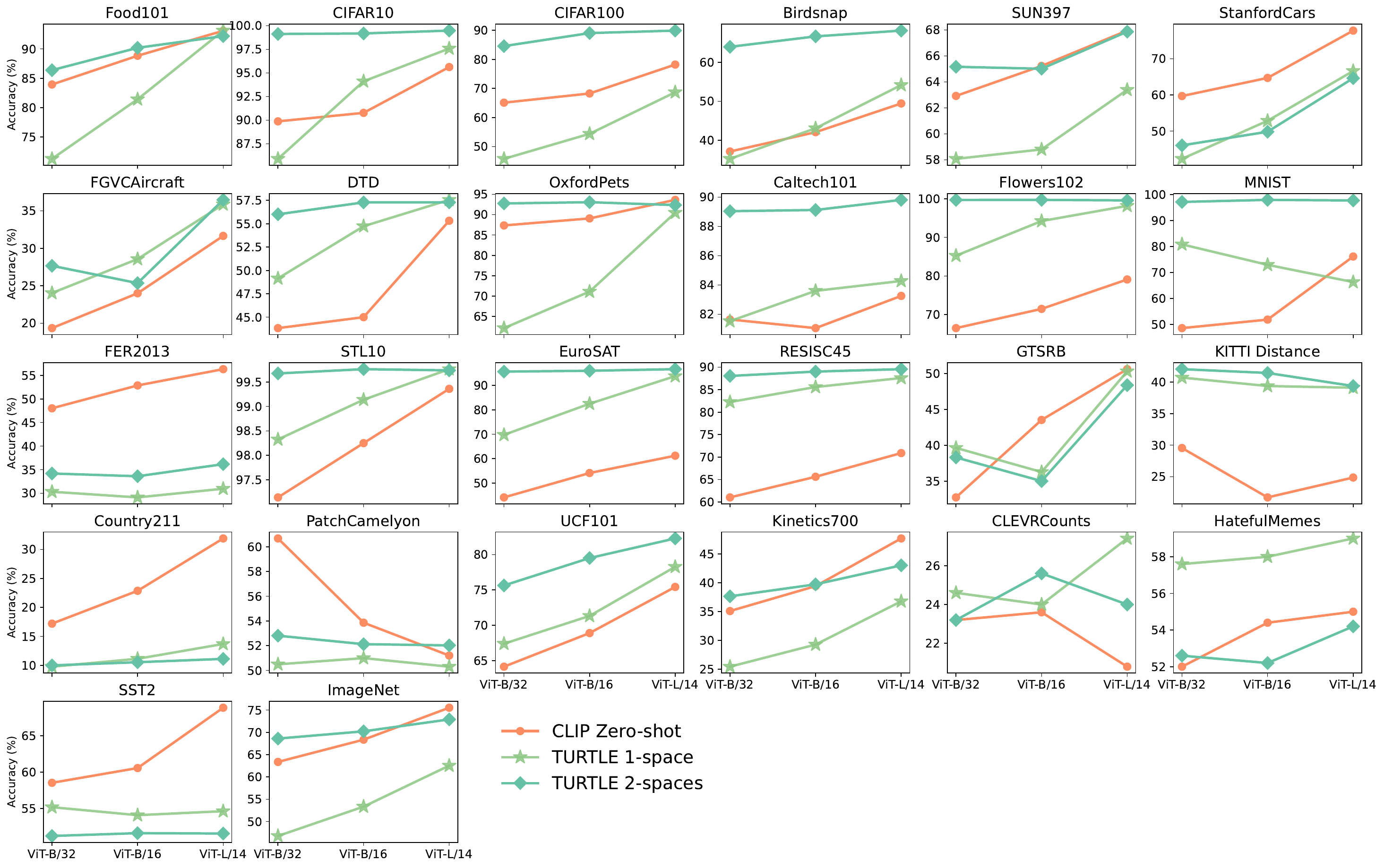}
    \caption{\xhdr{Per dataset performance of \methodnew and CLIP zero-shot using Vision Transformers} \methodnew 2-spaces uses DINOv2 ViT-g/14 as the second representation space.}
    \label{fig:clipvit_26datasets}
\end{figure}

\newpage
\section{Imbalanced Dataset and Entropy Regularization}\label{app:entropyablation}
\renewcommand{\thetable}{\Alph{section}\arabic{table}}
\renewcommand\thefigure{\Alph{section}\arabic{figure}} 
\renewcommand\thealgorithm{\Alph{section}\arabic{algorithm}}
\renewcommand{\theHtable}{\Alph{section}\arabic{table}}
\renewcommand\theHfigure{\Alph{section}\arabic{figure}} 
\renewcommand\theHalgorithm{\Alph{section}\arabic{algorithm}}
\setcounter{table}{0}
\setcounter{figure}{0}
\setcounter{algorithm}{0}
Following~\citet{xu2004maximum, van2020scan, gadetsky2023pursuit}, we use entropy regularization (\ref{turtle_entropy_regularization}) to prevent the task encoder from producing trivial solutions, \textit{i.e.}, assigning all the samples to a single class. By default we set the regularization strength to $\gamma=10$ for all the experiments. Note that the optimal solution of (\ref{turtle_entropy_regularization}) is to produce a labeling with the equal number of samples for each class. However, some of the datasets are not class balanced. In this case, a strong entropy regularization might hurt the learning process. To understand the effect of the entropy regularization, we show the average performance of \methodnew with $\gamma \in \{0, 1, 3, 5, 10\}$ separately on the imbalanced datasets (Birdsnap, FER2013, GTSRB, KITTI, HatefulMemes), and the rest $21$ balanced datasets in Figure~\ref{fig:entropy_reg}. 
The results indicate that the entropy regularization is generally helpful since $\eta=0$ might lead to trivial solutions. Furthermore, for the balanced datasets, \methodnew is robust to the choice of the regularization hyperparameter. While for the imbalanced datasets, a properly chosen regularization parameter could further improve the performance.

\begin{figure}[ht]
    \centering
    \includegraphics[width=0.3\linewidth]{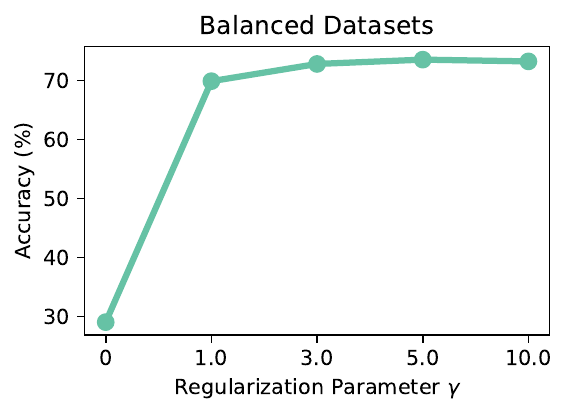}
    \includegraphics[width=0.3\linewidth]{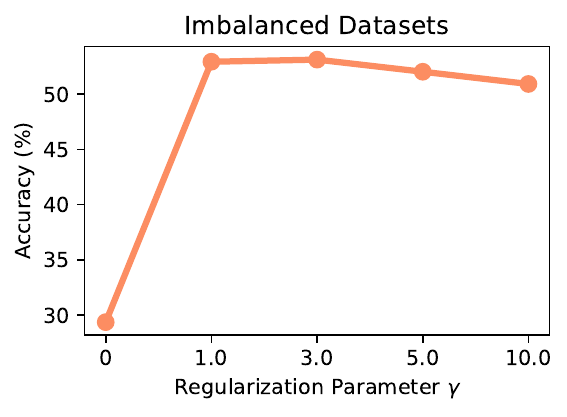}
    \caption{\xhdr{Ablation of the entropy regularization} We show the average performance for class imbalanced datasets (Birdsnap, FER2013, GTSRB, KITTI, HatefulMemes) and class balanced datasets (the rest 21 datasets) for the different entropy regularization strength.}
    \label{fig:entropy_reg}
    \vspace{-8pt}
\end{figure}

\section{\methodnew Trained and Evaluated on Test Split}\label{app:testtestexp}
\renewcommand{\thetable}{\Alph{section}\arabic{table}}
\renewcommand\thefigure{\Alph{section}\arabic{figure}} 
\renewcommand\thealgorithm{\Alph{section}\arabic{algorithm}}
\renewcommand{\theHtable}{\Alph{section}\arabic{table}}
\renewcommand\theHfigure{\Alph{section}\arabic{figure}} 
\renewcommand\theHalgorithm{\Alph{section}\arabic{algorithm}}
\setcounter{table}{0}
\setcounter{figure}{0}
\setcounter{algorithm}{0}
In previous experiments, we train \methodnew on the training split $\mathcal{D}_{tr}$ and evaluate the clustering accuracy on the test split $\mathcal{D}_{te}$. In this section, to study the performance of \methodnew in low data regime, we consider the setting when training and evaluating \methodnew directly on the test split. Figure~\ref{fig:dtrain_dval} compares the performance of \methodnew trained on $\mathcal{D}_{tr}$ and \methodnew trained on $\mathcal{D}_{te}$ on the 26 datasets. Both settings are evaluated on the test split. As shown in the plot, \methodnew trained on $\mathcal{D}_{te}$ achieves nearly identical performance as \methodnew trained on $\mathcal{D}_{tr}$ for $24$ out of $26$ datasets, except Caltech101 and Flowers102.
We found the discrepancy might be attributed to the fact that the Caltech101 and Flowers102 have balanced training split but imbalanced test split. Overall, the results suggest that \methodnew does not require a large amount of data to perform successful unsupervised transfer.

\begin{figure}[ht]
    \centering
    \includegraphics[width=0.41\textwidth]{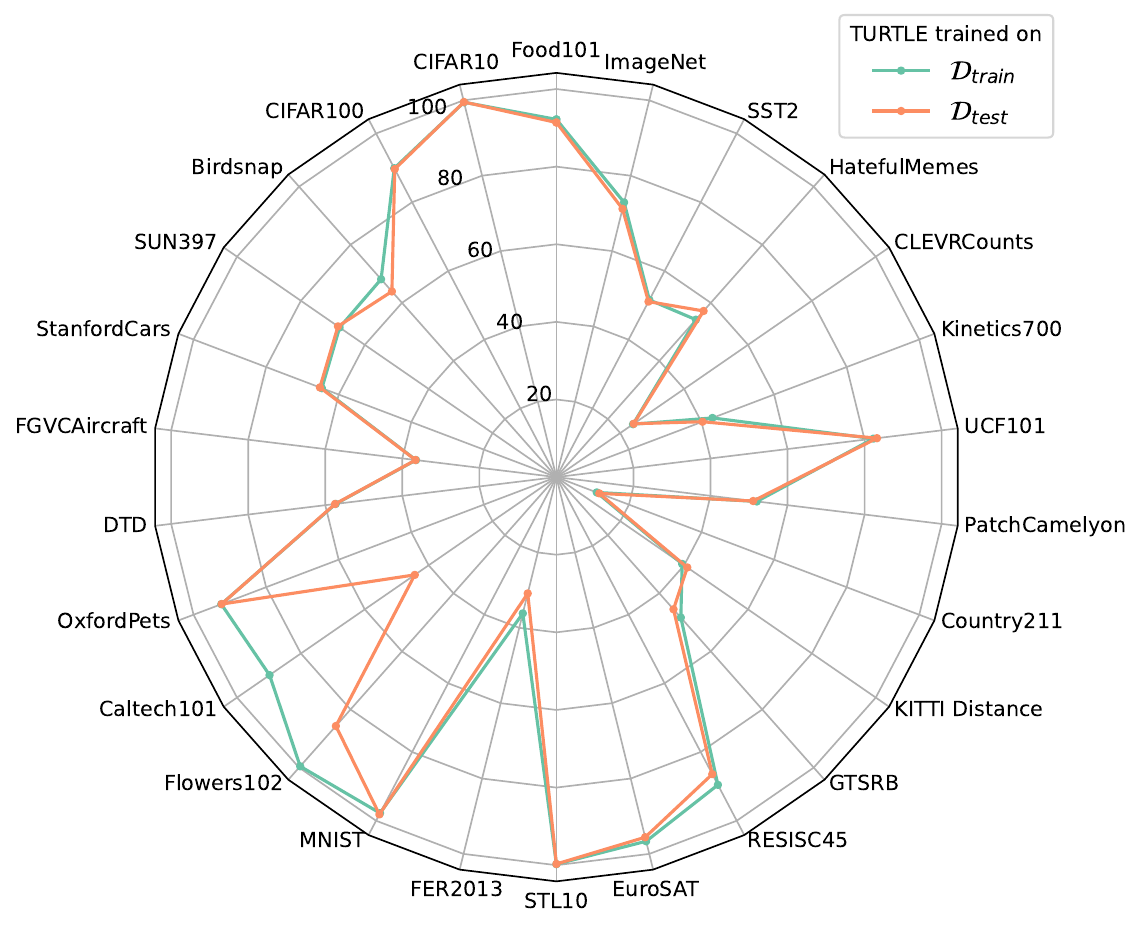}
    \caption{\textbf{\methodnew trained on test split achieves similar performance as \methodnew trained on training split for $24$ out $26$ of datasets}. Results are both evaluated on the test split. The discrepancy of Caltech101 and Flowers102 is because that they are balanced on training split but imbalanced on test split.}
    \label{fig:dtrain_dval}
\end{figure}

\section{Additional Analysis on Fine-grained Classification}\label{app:finegrainedadditional}
\renewcommand{\thetable}{\Alph{section}\arabic{table}}
\renewcommand\thefigure{\Alph{section}\arabic{figure}} 
\renewcommand\thealgorithm{\Alph{section}\arabic{algorithm}}
\renewcommand{\theHtable}{\Alph{section}\arabic{table}}
\renewcommand\theHfigure{\Alph{section}\arabic{figure}} 
\renewcommand\theHalgorithm{\Alph{section}\arabic{algorithm}}
\setcounter{table}{0}
\setcounter{figure}{0}
\setcounter{algorithm}{0}

In previous sections, we have evaluated the performance of \methodnew on 26 datasets, including $6$ fine-grained classification datasets: Food101, Flowers102, Birdsnap, StanfordCars, FGVCAircraft and OxfordPets.
According to the results from Table~\ref{tab:big-big-scaling-table}
and Figure~\ref{fig:turtle_clipdino_zs}, \methodnew outperforms CLIP zero-shot transfer on $3$ datasets (Flowers102, Birdsnap and FGVCAircraft) and performs comparably on $2$ datasets (Food101 and OxfordPets). The results indicate that \methodnew remains effective for the task of fine-grained classification.

To further study the dependence between number of classes and the performance of \methodnew, we perform the additional experiments on 4 datasets from the BREEDS benchmark~\cite{santurkar2020breeds}. These datasets are the subsets of ImageNet that contain both coarse and fine-grained labels. We run \methodnew for each dataset to infer the coarse and fine grained labels separately by specifying the ground truth number of classes for each case. The statistics for each dataset and \methodnew's performance are provided in Table~\ref{tab:breeds_datasets}.

We observe that \methodnew performs worse on the fine-grained classification compared to the coarse-grained classification on \textsc{Living}-17. The result is expected since fine-grained classification is considered to be a more difficult task. However, on \textsc{Entity}-13, \textsc{Entity}-30 and \textsc{Nonloving}-26, the performance of the fine-grained classification is better than the performance of the coarse-grained classification. We hypothesize that this might be due to the high intra-variance of coarse classes, which was also reported in the previous works on unsupervised image classification~\cite{van2020scan}.
In conclusion, the results suggest that the performance of \methodnew is not largely affected by the granularity of the dataset, but rather by the quality of the representations as indicated by Figure \ref{turtle_linear_datasets} and Figure \ref{fig:turtle_imagenet}.

\begin{table*}[ht]
\centering
\caption{\xhdr{BREEDS benchmark and performance of \methodnew} \methodnew column represents the performance of \methodnew on the given dataset with coarse or fine-grained taxonomy.}
\vspace{6pt}
\renewcommand{\arraystretch}{1.1}
\begin{tabular}{l|cccc|cc}
\midrule[1pt]
 \multirow{2}{*}{Dataset} & \multirow{2}{*}{\# Coarse classes} & \multirow{2}{*}{\# Fine classes} & \multirow{2}{*}{Train size} & \multirow{2}{*}{Test size} & \multicolumn{2}{c}{\methodnew} \\
   & & & & & Coarse & Fine \\
 \midrule
 \textsc{Entity}-13 & 13 & 260 & 334,712 & 13,000 & 73.1 & 85.5 \\
 \textsc{Entity}-30 & 30 & 240 & 307,828 & 12,000 & 79.5 & 82.4 \\
 \textsc{Living}-17 & 17 & 68 & 88,400 & 3,400 & 96.0 & 87.0 \\
 \textsc{Nonliving}-26 & 26 & 104 & 132,765 & 5,200 & 76.9 & 78.9 \\
\midrule[1pt]
\end{tabular}
\label{tab:breeds_datasets}
\end{table*}

\end{document}